\begin{document}

\twocolumn[

\aistatstitle{Minimax Rates of Estimation for Sparse PCA in High Dimensions}
\aistatsauthor{ Vincent Q. Vu \And Jing Lei }
\aistatsaddress{ Department of Statistics\\Carnegie Mellon University\\{vqv@stat.cmu.edu} \And
                Department of Statistics\\Carnegie Mellon University\\{jinglei@andrew.cmu.edu} }
]

\runningtitle{Minimax Rates for Sparse PCA}

\begin{abstract}
We study sparse principal components analysis in the high-dimensional
setting, where $p$ (the number of variables) can be much larger 
than $n$ (the number of observations).
We prove optimal, non-asymptotic lower and upper bounds on the minimax estimation error for
the leading eigenvector when it belongs to an $\ell_q$ ball for $q \in
[0,1]$. Our bounds are sharp in $p$ and $n$ for all $q \in [0, 1]$
over a wide class of distributions. The upper bound is obtained by analyzing 
the performance of $\ell_q$-constrained PCA. In particular, our results
provide convergence rates for $\ell_1$-constrained
PCA.

\end{abstract}

\section{Introduction}
\label{sec:introduction}
High-dimensional data problems, where the number of variables $p$
exceeds the number of observations $n$, are pervasive in
modern applications of statistical inference and machine learning.
Such problems have increased the necessity of dimensionality reduction for
both statistical and computational reasons. In some applications,
dimensionality reduction is the end goal, while in others it is
just an intermediate step in the analysis stream. In either case, 
dimensionality reduction is usually data-dependent and so
the limited sample size and noise may have an adverse affect.
Principal components analysis (PCA) is perhaps one of the most well known and
widely used techniques for unsupervised dimensionality reduction.  
However, in the high-dimensional situation, where $p/n$ does not tend to 0 as
$n \to \infty$, PCA may not give consistent estimates of eigenvalues and
eigenvectors of the population covariance matrix \cite{Johnstone:2009}. To
remedy this situation, sparsity constraints on estimates of the leading
eigenvectors have been proposed and shown to perform well in various
applications. In this paper we prove optimal minimax error bounds for sparse PCA
when the leading eigenvector is sparse.

\subsection{Subspace Estimation}
Suppose we observe i.i.d. random vectors $X_i \in \Real^p$, $i = 1,\ldots, n$
and we wish to reduce the dimension of the data from $p$ down to $k$.
PCA looks for $k$ uncorrelated, linear combinations of the $p$
variables that have maximal variance.
This is equivalent 
to finding a $k$-dimensional linear subspace whose orthogonal projection $A$
minimizes the mean squared error
\begin{equation}
  \label{eq:projection-error}
  \text{mse}(A) = \E \norm{(X_{i} - \E X_{i}) - A (X_{i} - \E X_{i})}_2^2
\end{equation}
\parencite[see][Chapter 7.2.3 for example]{Izenman:2008}.
The optimal subspace is determined by spectral decomposition of the population
covariance matrix
\begin{equation}
  \label{eq:pop-spectraldecomposition}
  \Sigma
  = \E X_i X_i^T - (\E X_i)(\E X_i)^T
  = \sum_{j=1}^p \lambda_j \theta_j \theta_j^T
  \,,
\end{equation}
where $\lambda_1 \geq \lambda_2 \geq \cdots \lambda_p \geq 0$ are the eigenvalues
and $\theta_1, \ldots \theta_p \in \Real^p$, orthonormal, are
eigenvectors of $\Sigma$. If $\lambda_k > \lambda_{k+1}$, then the optimal
$k$-dimensional linear subspace is the span of $\Theta = (\theta_1,\ldots,\theta_k)$ and its
projection is given by $\Pi = \Theta \Theta^T$.
Thus, if we know $\Sigma$ then we may optimally
(in the sense of \cref{eq:projection-error})
reduce the dimension of the data from $p$ to $k$ by the mapping $x \mapsto
\Theta \Theta^T x$.

In practice, $\Sigma$ is not known and so $\Theta$ must be estimated from the
data.  In that case we replace $\Theta$ by an estimate $\hat{\Theta}$ and
reduce the dimension of the data by the mapping $x \mapsto \hat{\Pi} x$, 
where $\hat{\Pi} = \hat{\Theta} \hat{\Theta}^T$.
PCA uses the spectral decomposition of the sample covariance matrix
\begin{equation*}
  S
  = \frac{1}{n} \sum_{i=1}^n X_i X_i^T - \bar{X} \bar{X}^T
  = \sum_{j=1}^{p\land n} l_j u_j u_j^T
  \,,
\end{equation*}
where $\bar{X}$ is the sample mean, and $l_j$ and $u_j$ are eigenvalues and eigenvectors
of $S$ defined analogously to \cref{eq:pop-spectraldecomposition}.
It reduces the dimension of the data to $k$ by the mapping $x \mapsto U U^T x$,
where $U = (u_1,\ldots,u_k)$.  

In the classical regime where $p$ is fixed and $n \to \infty$, PCA 
is a consistent estimator of the population eigenvectors.  However, 
this scaling is not appropriate for modern applications where $p$ is
 comparable to or larger than $n$. In that case, it has been observed
\cite{Paul:2007,Nadler:2008,Johnstone:2009} that if $p, n \to \infty$ and $p/n \to c > 0$,
then PCA can be an inconsistent estimator in the sense that the angle between
$u_1$ and $\theta_1$ can remain bounded away from $0$ even as $n \to
\infty$.

\subsection{Sparsity Constraints}
Estimation in high-dimensions may be beyond hope without additional structural
constraints.  In addition to making estimation feasible, these structural constraints may 
also enhance interpretability of the estimators.  One important example of this is 
sparsity. The notion of sparsity is that a few variables have large effects,
while most others are negligible. This type of assumption is often reasonable
in applications and is now widespread in high-dimensional
statistical inference.

Many researchers have proposed sparsity constrained versions of PCA along with
practical algorithms, and research in this direction continues to be very active
\cite[e.g.,][]{Jolliffe:2003,Zou:2006,dAspremont:2007,Shen:2008,Witten:2009}.
Some of these works are based on the idea of adding an $\ell_1$ constraint to
the estimation scheme. For instance, \textcite{Jolliffe:2003} proposed adding an
$\ell_1$ constraint to the variance maximization formulation of PCA. Others
have proposed convex relaxations of the ``hard'' $\ell_0$-constrained form of
PCA \cite{dAspremont:2007}. Nearly all of these proposals are based on an
iterative approach where the eigenvectors are estimated in a one-at-a-time
fashion with some sort of deflation step in between \cite{Mackey:2008}. 
For this reason, we consider the basic problem of estimating the leading 
population eigenvector $\theta_1$.

The $\ell_q$ balls for $q \in [0,1]$ provide an appealing way to make the
notion of sparsity concrete.  These sets are defined by
\begin{equation*}
  \Ball_q^p(R_q)
  = \{ \theta \in \Real^p : \textstyle{\sum_{j=1}^p} |\theta_j|^q \leq R_q \}
\end{equation*}
and
\begin{equation*}
  \Ball_0^p(R_0)
  = \{ \theta \in \Real^p : \textstyle{\sum_{j=1}^p} 1_{\{ \theta_j \neq 0\}} \leq R_0 \}
  \,.
\end{equation*}
The case $q = 0$ corresponds to ``hard'' sparsity where $R_0$ is the number of 
nonzero entries of the vectors.  For $q > 0$ the $\ell_q$ balls capture 
``soft'' sparsity where a few of the entries of $\theta$ are large, while most 
are small. The soft sparsity case may be more realistic for applications where the
effects of many variables may be very small, but still nonzero.

\subsection{Minimax Framework and High-Dimensional Scaling}
In this paper, we use the statistical minimax framework to elucidate the
difficulty/feasibility of estimation when the leading eigenvector $\theta_1$
is assumed to belong to $\Ball_q^p(R_q)$ for $q \in [0,1]$. The framework can
make clear the fundamental limitations of statistical inference that
\emph{any estimator} $\hat{\theta}_1$ must satisfy. Thus, it can reveal gaps
between optimal estimators and computationally tractable ones, and also
indicate when practical algorithms achieve the fundamental limits.

\paragraph{Parameter space}
There are two main ingredients in the minimax framework. The first is the class
of probability distributions under consideration. These are usually associated 
with some parameter space corresponding to the structural constraints.
Formally, suppose that $\lambda_1 > \lambda_2$. Then we may write
\cref{eq:pop-spectraldecomposition} as
\begin{equation}
\label{eq:model}
  \Sigma
  = \lambda_1 \theta_1\theta_1^T + \lambda_2 \Sigma_0\,,
\end{equation}
where $\lambda_1 > \lambda_2 \geq 0$, $\theta_1 \in \Sphere_2^{p-1}$ (the unit sphere of $\ell_2$),
$\Sigma_0 \succeq 0$, $\Sigma_0 \theta = 0$, and $\norm{\Sigma_0}_2 = 1$ 
(the spectral norm of $\Sigma_0$).
In model~\eqref{eq:model}, the covariance matrix $\Sigma$ has a unique largest
eigenvalue $\lambda_1$. 
Throughout this paper, for $q \in [0,1]$, we consider the class
\newcommand{\modelclass}[1]{\mathcal{M}_{#1}}
\begin{equation*}
  \modelclass{q}(\lambda_1, \lambda_2,\bar{R}_q, \alpha, \kappa)
\end{equation*}
that consists of all probability distributions on
$X_i \in \Real^p$, $i=1,\ldots,n$ satisfying model~\eqref{eq:model} with 
$\theta_1\in \Ball_q^p(\bar{R}_q+1)$, and
Assumption~\autoref{assumption:lq-radius} (below) with $\alpha$ and $\kappa$ depending on
$q$ only.

\paragraph{Loss function}
The second ingredient in the minimax framework is the loss function.
In the case of subspace estimation, an obvious criterion for evaluating the
quality of an estimator $\hat{\Theta}$ is the squared distance between
$\hat{\Theta}$ and $\Theta$. However, it is not appropriate because
$\Theta$ is not unique---$\Theta$ and $\Theta V$ span the same subspace 
for any $k \times k$ orthogonal matrix $V$.
On the other hand, the orthogonal projections $\Pi = \Theta\Theta^T$
and $\hat{\Pi} = \hat{\Theta} \hat{\Theta}^T$ are unique. So we consider the
loss function defined by the Frobenius norm of their difference:
\begin{equation*}
  \norm{\hat{\Pi} - \Pi}_F
  \,.
\end{equation*}
In the case where $k = 1$, the only possible non-uniqueness in the leading
eigenvector is its sign ambiguity. Still, we prefer to use the above loss
function in the form
\begin{equation*}
  \norm{\hat{\theta}_1 \hat{\theta}_1^T - \theta_1 \theta_1}_F
\end{equation*}
because it generalizes to the case $k > 1$.  Moreover, 
when $k = 1$, it turns out to be equivalent to both the Euclidean distance
between $\theta_1$, $\hat{\theta}_1$ (when they belong to the same half-space) 
and the magnitude of the sine of the angle between $\theta_1$, $\hat{\theta}_1$.
(See \Cref{lem:canonical_angles,lem:L2-equivalence} in the Appendix.)

\paragraph{Scaling}
Our goal in this work is to provide non-asymptotic bounds on the minimax error
\begin{equation*}
  \min_{\hat{\theta}_1} 
  \max_{P \in \modelclass{q}(\lambda_1, \lambda_2,\bar{R}_q, \alpha, \kappa)}
  \E_P \norm{\hat{\theta}_1 \hat{\theta}_1^T - \theta_1 \theta_1}_F
  \,,
\end{equation*}
where the minimum is taken over all estimators that depend only on $X_1,\ldots,X_n$,
that explicitly track the dependence of the minimax error on the vector 
$(p, n, \lambda_1, \lambda_2, \bar{R}_q)$.  As we stated early, 
the classical $p$ fixed, $n \to \infty$ scaling completely misses the effect
of high-dimensionality; we, on the other hand, want to highlight the role that
sparsity constraints play in high-dimensional estimation.
Our lower bounds on the minimax error use an information theoretic technique
based on Fano's Inequality. The upper bounds are obtained by constructing an
$\ell_q$-constrained estimator that nearly achieves the lower bound.

\subsection{$\ell_q$-Constrained Eigenvector Estimation}
Consider the constrained maximization problem
\begin{equation}
  \label{eq:lqpca}
  \begin{aligned}
    & \text{maximize}
    & & b^T S b, \\
    & \text{subject to}
    & & b \in \Sphere_2^{p-1} \cap \Ball_q^p(\rho_q)
  \end{aligned}
\end{equation}
and the estimator defined to be the solution of the optimization problem.
The feasible set is non-empty when $\rho_q \geq 1$, and the 
$\ell_q$ constraint is active only when $\rho_q \leq p^{1 - \frac{q}{2}}$.
The $\ell_q$-constrained estimator corresponds to ordinary PCA when $q = 2$
and $\rho_q = 1$. When $q \in [0,1]$, the $\ell_q$ constraint promotes sparsity
in the estimate. Since the criterion is a convex function of $b$, the 
convexity of the constraint set is inconsequential---it may be replaced by its 
convex hull without changing the optimum.

The case $q = 1$ is the most interesting from a practical point of view, 
because it corresponds to the well-known Lasso estimator for linear
regression. In this case, \cref{eq:lqpca} coincides with the method proposed by
\textcite{Jolliffe:2003}, though \eqref{eq:lqpca} remains a difficult \emph{convex 
maximization} problem.  Subsequent authors \cite{Shen:2008,Witten:2009} have 
proposed efficient algorithms that can approximately solve \cref{eq:lqpca}. Our 
results below are (to our knowledge) the first convergence rate results 
available for this $\ell_1$-constrained PCA estimator.

\subsection{Related Work}
\Textcite{Amini:2009} analyzed the performance of a semidefinite programming
(SDP) formulation of sparse PCA for a generalized spiked covariance model
\cite{Johnstone:2001}. Their model assumes that the nonzero entries of
the eigenvector all have the same magnitude, and that the covariance matrix
corresponding to the nonzero entries is of the form $\beta \theta_1\theta_1^T
+ I$. They derived upper and lower bounds on the success probability for model
selection under the constraint that $\theta_1 \in \Ball_0^p(R_0)$.  Their upper bound 
is conditional is conditional on the SDP based estimate being rank 1 . Model selection 
accuracy and estimation accuracy are different notions of accuracy. One does 
not imply the other. In comparison, our results below apply to a wider class of 
covariance matrices and in the case of $\ell_0$ we provide sharp bounds for the 
estimation error.

Operator norm consistent estimates of the covariance matrix automatically
imply consistent estimates of eigenspaces. This follows from matrix 
perturbation theory \cite[see, e.g.,][]{StewartAndSun}.
There has been much work on finding operator norm consistent covariance 
estimators in high-dimensions under assumptions on the sparsity or bandability of the entries of 
$\Sigma$ or $\Sigma^{-1}$ \cite[see, 
e.g.,][]{Bickel:2008a,Bickel:2008,ElKaroui:2008}. Minimax 
results have been established in that setting by \textcite{Cai:2010}. However, 
sparsity in the covariance matrix and sparsity in the leading eigenvector are 
different conditions. There is some overlap (e.g. the spiked covariance model), 
but in general, one does not imply the other.

\Textcite{Raskutti:2011} studied the related problem of minimax estimation for 
linear regression over $\ell_q$ balls. Remarkably, the rates that we derive for 
PCA are nearly identical to those for the Gaussian sequence model and 
regression. The work of \textcite{Raskutti:2011} is close to ours in that they 
inspired us to use some similar techniques for the upper bounds.

While writing this paper we became aware of an unpublished manuscript by 
\textcite{Paul:draft}.  They also study PCA under $\ell_q$ constraints with a 
slightly different but equivalent loss function.  Their work provides asymptotic 
lower bounds for the minimax rate of convergence over $\ell_q$ balls for $q \in (0,2]$.
They also analyze the performance of an estimator based on a multistage
thresholding procedure and show that asymptotically it nearly attains the optimal rate of
convergence.  Their analysis used spiked covariance
matrices (corresponding to $\lambda_2 \Sigma_0 = (\Id{p} -
\theta_1\theta_1^T)$ in \cref{eq:model} when $k=1$), while we allow a more
general class of covariance matrices. We note that our work provides
\emph{non-asymptotic} bounds that are optimal over $(p, n, \bar{R}_q)$ when 
$q \in \{0,1\}$ and optimal over $(p,n)$ when $q \in (0,1)$.

In next section, we present our main results along with some additional 
conditions to guarantee that estimation over $\modelclass{q}$ remains 
non-trivial. The main steps of the proofs are in \Cref{sec:proofs}. In the 
proofs we state some auxiliary lemmas. They are mainly technical, so we defer 
their proofs to the Appendix. 
\Cref{sec:conclusion} concludes the paper with some comments on extensions of this work.
\section{Main Results}
Our minimax results are formulated in terms of non-asymptotic bounds that
depend explicitly on $(n, p, R_q, \lambda_1,\lambda_2)$.  To facilitate
presentation, we introduce the notations
\begin{equation*}
  \bar{R}_q = R_q-1
  \text{ and }
  \sigma^2 = \frac{\lambda_1 \lambda_2}{(\lambda_1 - \lambda_2)^2}
  \,.
\end{equation*}
$\bar{R}_q$ appears naturally in our lower bounds because the eigenvector $\theta_1$ 
belongs to the sphere of dimension $p-1$ due to the constraint that $\norm{\theta_1}_2 = 1$.
Intuitively, $\sigma^2$ plays the role of the effective noise-to-signal ratio.  When 
comparing with minimax results for linear regression over $\ell_q$ balls, $\sigma^2$ is 
exactly analogous to the noise variance in the linear model.
Throughout the paper, there are absolute constants $c, C, c_1$, etc,\ldots
that may take different values in different expressions.

The following assumption on $R_q$, the size of the $\ell_q$ ball, is to ensure
that the eigenvector is not too dense.
\begin{assumption}
  \label{assumption:lq-radius}
  There exists $\alpha \in (0, 1]$, depending only on $q$, such that
  \begin{equation}
  \label{assumption:lq-radius-a}
    \bar{R}_q
    \leq
      \kappa^q (p-1)^{1-\alpha}
      \bar{R}_q^{\frac{2\alpha}{2-q}}
      \left[
        \frac{\sigma^2}{n} \log \frac{p-1}{\bar{R}_q^{\frac{2}{2-q}}}
      \right]^{\frac{q}{2}}
      \,,
  \end{equation}
  where $\kappa \leq c \alpha/16$ is a constant depending only on $q$, and
  \begin{equation}
  \label{assumption:lq-radius-b}
    1 \leq \bar{R}_q \leq e^{-1} (p-1)^{1-q/2}
    \,.
  \end{equation}
\end{assumption}
Assumption \autoref{assumption:lq-radius} also ensures that the effective
noise $\sigma^2$ is not too
small---this may happen if the spectral gap $\lambda_1 - \lambda_2$ is 
relatively large or if $\lambda_2$ is relatively close to 0.  In either case, 
the distribution of $X_i / \lambda_1^{1/2}$ would concentrate on a 1-dimensional subspace 
and the problem would effectively degrade into a low-dimensional one. 
If $R_q$ is relatively large, then $\Sphere_2^{p-1} \cap \Ball_q^p(R_q)$ is not much smaller than
$\Sphere_2^{p-1}$ and the parameter space will include many non-sparse vectors. 
In the case $q=0$, Assumption~\autoref{assumption:lq-radius} simplifies
because we may take $\alpha = 1$ and only require that
\begin{equation*}
  1 \leq \bar{R}_0 \leq e^{-1} (p-1)
  \,.
\end{equation*}
In the high-dimensional case that we are interested, where $p > n$, the
condition that
\begin{equation*}
 1 \leq \bar{R}_q \leq e^{-1} \kappa^q \sigma^q p^{(1-\alpha')/2}
 \,,
\end{equation*}
for some $\alpha' \in [0,1]$,
is sufficient to ensure that \eqref{assumption:lq-radius-a} holds for $q \in (0,1]$.
Alternatively, if we let $\alpha = 1 - q/2$ then \eqref{assumption:lq-radius-a} 
is satisfied for $q \in (0,1]$ if
\begin{equation*}
  1
  \leq
    \kappa^2 \sigma^2
    \big( (p-1)/n \big)
    \log\big( (p-1) / \bar{R}_q^{\frac{2}{2-q}} \big)
    \,.
\end{equation*}

The relationship between $n$, $p$, $R_q$ and $\sigma^2$ described in
Assumption \autoref{assumption:lq-radius} indicates a regime in which the
inference is neither impossible nor trivially easy. 
We can now state our first main result.

\begin{theorem}[Lower Bound for Sparse PCA]
  \label{thm:lower-bound}
  Let $q \in [0,1]$. If Assumption \autoref{assumption:lq-radius} holds, then 
  there exists a universal constant $c > 0$ depending only on $q$, 
  such that every estimator $\hat{\theta}_1$ satisfies
  \begin{align*}
    &
      \max_{ P \in \modelclass{q}(\lambda_1, \lambda_2,\bar{R}_q, \alpha, \kappa) } 
      \E_P\norm{\hat{\theta}_1\hat{\theta}_1^T - \theta_1\theta_1^T}_F
    \\
    &\geq
      c
      \min\left\{
        1 \,,\,
        \bar{R}_q^{\frac{1}{2}}
        \Biggr[
            \frac{\sigma^2}{n}
            \log\Big( (p-1) / \bar{R}_q^{\frac{2}{2-q}} \Big)
        \Biggr]^{\frac{1}{2}-\frac{q}{4}}
      \right\}
    \,.
  \end{align*}
\end{theorem}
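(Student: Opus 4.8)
The plan is to use Fano's inequality (the standard information-theoretic lower bound for minimax risk), which requires constructing a large finite subset of the parameter space $\modelclass{q}$ that is simultaneously well-separated in the loss $\|\hat\theta_1\hat\theta_1^T - \theta_1\theta_1^T\|_F$ and has small pairwise Kullback--Leibler divergence. Since the loss is (up to constants, by \Cref{lem:L2-equivalence}) equivalent to the Euclidean distance $\|\hat\theta_1 - \theta_1\|_2$ when the vectors lie in the same half-space, it suffices to pack the sphere $\Sphere_2^{p-1}\cap\Ball_q^p(\bar R_q+1)$. First I would fix the sparsity level: choose an integer $s$ and consider eigenvectors supported on $s$ coordinates, each nonzero entry of magnitude $\approx (\text{something})/\sqrt{s}$, with the $\ell_q$ constraint $s \cdot (\varepsilon/\sqrt{s})^q \le \bar R_q$ dictating the trade-off. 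For $q=0$ this is just $s \le \bar R_0$; for $q\in(0,1]$ the optimal $s$ is obtained by balancing, which is where the exponent $\frac{2}{2-q}$ enters: one takes $s \approx \bar R_q^{2/(2-q)}$ times a logarithmic factor.

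The packing itself I would build from the Varshamov--Gilbert bound: take a set of binary strings in $\{0,1\}^{p-1}$ (or $\{0,1\}^{\text{slightly less than }p}$) of Hamming weight $s$ with pairwise Hamming distance $\gtrsim s$, of cardinality $\exp(c\, s\log((p-1)/s))$, and map each string $\omega$ to a unit vector $\theta_\omega$ of the form (roughly) $\sqrt{1-\gamma^2}\,e_1 + \gamma\,(\text{normalized indicator of }\omega)$ — or, more cleanly for $k=1$, a perturbation within the sphere with a small ``signal'' parameter $\gamma \asymp \sqrt{(\sigma^2/n)\log((p-1)/s)}$. The separation in Euclidean distance between distinct $\theta_\omega$ is then $\asymp \gamma\sqrt{s/s} = \gamma$ up to constants (by the Hamming separation of the underlying strings), but more precisely the squared $\ell_2$-distance scales like $\gamma^2 \cdot (\text{Hamming distance})/s \asymp \gamma^2$, giving $\|\theta_\omega\theta_\omega^T - \theta_{\omega'}\theta_{\omega'}^T\|_F \asymp \gamma$. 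Plugging $\gamma^2 \asymp (\sigma^2/n)\log((p-1)/s)$ and then $s \asymp \bar R_q^{2/(2-q)}$ gives the claimed rate $\bar R_q^{1/2}[(\sigma^2/n)\log((p-1)/\bar R_q^{2/(2-q)})]^{1/2-q/4}$ after noting that for $q=0$ the prefactor is $\bar R_0^{1/2}\gamma$ while for general $q$ one must account for the rescaling of the nonzero entries by $1/\sqrt s$, which converts $s^{1/2}\gamma$ into $\bar R_q^{1/2}\gamma^{1-q/2}$. The outer $\min\{1,\cdot\}$ simply reflects that the loss is bounded by $\sqrt 2$ (distance between rank-one projections), so the bound is only informative when the second term is small; Assumption~\autoref{assumption:lq-radius} is exactly what guarantees we are in that regime and that the chosen $s$ and $\gamma$ are feasible (i.e.\ $s\le p-1$, $\gamma<1$, and the KL bound beats $\log(\text{packing size})$).

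For the KL/mutual-information bound, I would work with Gaussian distributions $N(0,\Sigma_\omega)$ where $\Sigma_\omega = \lambda_1\theta_\omega\theta_\omega^T + \lambda_2(\Id p - \theta_\omega\theta_\omega^T)$ (the spiked model, which lies in $\modelclass q$), so that the $n$-fold KL divergence $D(P_\omega^n \,\|\, P_{\omega'}^n)$ admits a clean closed form in terms of $\Sigma_\omega\Sigma_{\omega'}^{-1}$; a short computation shows it is $\asymp n\cdot\frac{(\lambda_1-\lambda_2)^2}{\lambda_1\lambda_2}\,\|\theta_\omega\theta_\omega^T - \theta_{\omega'}\theta_{\omega'}^T\|_F^2 = n\gamma^2/\sigma^2 \cdot(\text{const})$ — note $\sigma^2 = \lambda_1\lambda_2/(\lambda_1-\lambda_2)^2$ appears precisely here, which is why it plays the role of noise-to-signal. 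Averaging, the mutual information is $\lesssim n\gamma^2/\sigma^2 \asymp \log((p-1)/s)$, which by our choice of $\gamma$ is a small constant fraction of $\log(\text{packing size}) \asymp s\log((p-1)/s)$ (using $s\ge 1$); Fano then yields a constant lower bound on the probability of error, hence the minimax lower bound. The main obstacle I anticipate is the bookkeeping in the $q\in(0,1]$ case: making sure the chosen support size $s\approx\bar R_q^{2/(2-q)}$ is a valid integer that is $\le p-1$, that the entry magnitude keeps $\theta_\omega$ on the unit sphere with the right $\ell_q$ norm, and that all the logarithmic factors $\log((p-1)/\bar R_q^{2/(2-q)})$ remain bounded away from both $0$ and a constant multiple of themselves under Assumption~\autoref{assumption:lq-radius} — this is exactly the kind of place where the constants $\alpha,\kappa$ in the assumption are doing hidden work, and getting the final exponent $\tfrac12 - \tfrac q4$ to come out cleanly requires tracking the interplay between the $1/\sqrt s$ rescaling and the $\ell_q$ budget carefully.
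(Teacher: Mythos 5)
Your proposal is correct in structure and matches the paper's proof closely: Fano's method applied to a Varshamov--Gilbert hypercube packing of the form $\big((1-\epsilon^2)^{1/2},\,\epsilon\,\omega/\sqrt{s}\big)$ with $\omega\in\{0,1\}^{p-1}$ of weight $s$, together with spiked Gaussian distributions $N(0,\Sigma_\theta)$ whose $n$-fold KL divergence is exactly $\tfrac{n}{2\sigma^2}\|\theta\theta^T-\theta'{\theta'}^T\|_F^2$, which is precisely \Cref{lem:special-set} and \Cref{lem:kldivergence} in the paper. One algebraic slip to fix when writing it up: with the $1/\sqrt{s}$ normalization the pairwise separation is $\asymp\gamma$ and the packing has $\log$-cardinality $\asymp s\log((p-1)/s)$, so the Fano balance forces $\gamma^2\asymp(\sigma^2 s/n)\log((p-1)/s)$ (not $(\sigma^2/n)\log((p-1)/s)$) while the $\ell_q$ budget forces $s\asymp(\bar{R}_q/\gamma^q)^{2/(2-q)}$ (which depends on $\gamma$, not simply $\bar{R}_q^{2/(2-q)}$); solving this coupled system is what produces the $\bar{R}_q^{1/2}$ prefactor and the exponent $\tfrac12-\tfrac q4$, and is where Assumption~\autoref{assumption:lq-radius} is used to control the $\log$ term, exactly as you anticipate.
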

Our proof of Theorem \ref{thm:lower-bound} is given in Section
\ref{proof:lower-bound}.  It follows the usual nonparametric lower
bound framework.  The main challenge is to construct a rich packing set in 
$\Sphere_2^{p-1}\cap \Ball_q^p(R_q)$.  (See \Cref{lem:special-set}.)
We note that a similar construction has been independently developed and
applied in similar a context by \textcite{Paul:draft}.

Our upper bound result is based on analyzing the solution to the
$\ell_q$-constrained maximization problem (\ref{eq:lqpca}), which
is a special case of empirical risk minimization.  In order to bound the
empirical process, we assume the data vector has sub-Gaussian tails, which is
nicely described by the Orlicz $\psi_\alpha$-norm.

\begin{definition}
  For a random variable $Y \in \Real$, the Orlicz $\psi_\alpha$-norm is
  defined for $\alpha \geq 1$ as
\begin{equation*}
  \norm{Y}_{\psi_\alpha}
  = \inf\{ c > 0 : \E \exp(|Y/c|^\alpha) \leq 2 \}
  \,.
\end{equation*}
\end{definition}
Random variables with finite $\psi_\alpha$-norm correspond to those whose
tails are bounded by $\exp(-Cx^\alpha)$. 

The case $\alpha = 2$ is important because it corresponds to random variables
with sub-Gaussian tails.  For example, if $Y \sim \normal(0, \sigma^2)$ then
$\norm{Y}_{\psi_2} \leq C \sigma$ for some positive constant $C$.
See \cite[Chapter 2.2]{vanderVaartAndWellner} for a complete introduction.

\begin{assumption}
  \label{assumption:subgaussian}
  There exist i.i.d. random vectors $Z_1,\ldots,Z_n \in \Real^p$ such that
  $\E Z_i = 0$, $\E Z_i Z_i^T = \Id{p}$,
  \begin{gather*}
    X_i = \mu + \Sigma^{1/2} Z_i \text{ and }
    \sup_{x \in \Sphere_2^{p-1}} \norm{\langle Z_i, x \rangle }_{\psi_2} \leq K
    \,,
  \end{gather*}
  where $\mu \in \Real^p$ and $K > 0$ is a constant.
\end{assumption}
Assumption~\autoref{assumption:subgaussian} holds for a variety of 
distributions, including the multivariate Gaussian (with $K^2 = 8/3$) and those 
of bounded random vectors. Under this assumption, we have the following theorem.

\begin{theorem}[Upper Bound for Sparse PCA]
  \label{thm:upper-bound}
  Let $\hat{\theta}_1$ be the $\ell_q$ constrained PCA estimate in \cref{eq:lqpca}
  with $\rho_q = R_q$, and let
  \begin{equation*}
    \epsilon = \norm{\hat{\theta}_1 \hat{\theta}_1^T - \theta_1 \theta_1^T}_F
    \text{ and }
    \tilde\sigma=\lambda_1/(\lambda_1-\lambda_2) 
    \,.
  \end{equation*}
  If the distribution of $(X_1,\ldots,X_n)$ belongs to 
  $\modelclass{q}(\lambda_1, \lambda_2,\bar{R}_q, \alpha, \kappa)$
  and satisfies 
  Assumptions~\autoref{assumption:lq-radius} and \autoref{assumption:subgaussian},
  then there exists a constant $c > 0$ depending only on $K$ such that the following hold:
  \begin{enumerate}
  \item If $q\in (0,1)$, then
  \begin{align*}
    \E \epsilon^2
    &\leq
      c
      \min\left\{ 
        1
      \,,\,
        R_q^2
        \Biggr[
          \frac{\tilde{\sigma}^2}{n} \log p
        \Biggr]^{1 - \frac{q}{2}}
      \right\}
    \,.
  \end{align*}
  \item If $q=1$, then
  \begin{align*}
    \E \epsilon^2
    &\leq
      c 
      \min\left\{ 
        1
      \,,\,
        R_1
        \Biggr[
          \frac{\tilde{\sigma}^2}{n} \log\Big(p / R_1^2 \Big)
        \Biggr]^{\frac{1}{2}}
      \right\}
    \,.
  \end{align*}
  \item If $q=0$, then
  \begin{align*}
      [\E \epsilon]^2
    &\leq 
      c
      \min\left\{ 
        1
      \,,\,
        R_0
        \frac{\tilde{\sigma}^2}{n} \log\Big(p / R_0 \Big)
      \right\}
    \,.
  \end{align*}
  \end{enumerate}
\end{theorem}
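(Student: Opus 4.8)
The plan is to analyze $\hat\theta_1$ as an empirical risk minimizer. Because $\rho_q = R_q$ and $\theta_1 \in \Sphere_2^{p-1}\cap\Ball_q^p(R_q)$ (from membership in $\modelclass{q}$), the population eigenvector $\theta_1$ is feasible in \eqref{eq:lqpca}, so optimality of $\hat\theta_1$ yields the basic inequality $\hat\theta_1^T S\hat\theta_1 \geq \theta_1^T S\theta_1$. Writing $S = \Sigma + (S-\Sigma)$ and rearranging gives
\[
  \theta_1^T\Sigma\theta_1 - \hat\theta_1^T\Sigma\hat\theta_1 \;\leq\; \langle S-\Sigma,\ \hat\theta_1\hat\theta_1^T - \theta_1\theta_1^T\rangle .
\]
For the left side I would exploit \eqref{eq:model}: $\theta_1^T\Sigma\theta_1 = \lambda_1$, and for any unit $b$ with $\langle b,\theta_1\rangle^2 = \cos^2\phi$, since $\Sigma_0\theta_1 = 0$ and $\norm{\Sigma_0}_2 = 1$ one gets $b^T\Sigma b = \lambda_1\cos^2\phi + \lambda_2\sin^2\phi\,(u^T\Sigma_0 u) \leq \lambda_1\cos^2\phi + \lambda_2\sin^2\phi$, so the left side is at least $(\lambda_1-\lambda_2)\sin^2\phi$. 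By \Cref{lem:canonical_angles,lem:L2-equivalence}, $\epsilon^2 = 2\sin^2\phi$, so the basic inequality reduces to the data-dependent bound
\[
  \frac{\lambda_1-\lambda_2}{2}\,\epsilon^2 \;\leq\; \langle S-\Sigma,\ \hat\theta_1\hat\theta_1^T - \theta_1\theta_1^T\rangle ,
\]
and the rest of the proof is about controlling the right-hand side.

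To that end I would fix the sign of $\hat\theta_1$ so that $\langle\hat\theta_1,\theta_1\rangle\geq 0$, put $v = \hat\theta_1 - \theta_1$, and use $\norm{v}_2 \leq \epsilon$ together with $\norm{v}_q^q \leq 2R_q$ (the $q\leq 1$ quasi-triangle inequality), and the identity
\[
  \langle S-\Sigma,\ \hat\theta_1\hat\theta_1^T - \theta_1\theta_1^T\rangle \;=\; 2\langle (S-\Sigma)\theta_1,\ v\rangle + v^T(S-\Sigma)v .
\]
I would bound the linear term by $2\norm{(S-\Sigma)\theta_1}_\infty\,\norm{v}_1$ and the quadratic term by a restricted operator norm of $S-\Sigma$ over an appropriate $\ell_q$-ball, times $\norm{v}_2^2$. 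Under Assumption~\autoref{assumption:subgaussian}, each entry of $S-\Sigma$ and each coordinate of $(S-\Sigma)\theta_1$ is a centered average of products of sub-Gaussian random variables, hence sub-exponential; a Bernstein inequality together with a union bound over $O(p)$, resp.\ $O(p^2)$, terms give $\norm{(S-\Sigma)\theta_1}_\infty \lesssim \lambda_1\sqrt{(\log p)/n}$ and a matching restricted-operator-norm bound, valid because Assumption~\autoref{assumption:lq-radius} puts us in the range $n \gtrsim \log p$. To cash in the $\ell_q$ constraint I would use the standard interpolation inequality: for $\norm{v}_2 \leq t$ and $\norm{v}_q^q \leq 2R_q$, splitting the coordinates of $v$ at the $s$-th largest one gives $\norm{v}_1 \lesssim \sqrt{s}\,t + R_q\,s^{1-1/q}$ when $q\in(0,1)$ (and $\norm{v}_1 \leq 2R_1$ when $q=1$, $\norm{v}_1 \leq \sqrt{2R_0}\,t$ when $q=0$); optimizing over the number $s$ of ``active'' coordinates is what produces the $\log(p/R_1^2)$ and $\log(p/R_0)$ refinements.

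Combining the last two displays gives a self-bounding inequality for $\epsilon^2$; to extract a rate I would run the usual peeling argument over dyadic scales of $\epsilon$, which shows that with high probability $\epsilon$ is, up to constants, no larger than the critical radius at which the local complexity matches the curvature $(\lambda_1-\lambda_2)t^2$. Solving that fixed point gives $\epsilon^2 \lesssim R_q^2[\tilde\sigma^2(\log p)/n]^{1-q/2}$ for $q\in(0,1)$, $\epsilon^2 \lesssim R_1[\tilde\sigma^2\log(p/R_1^2)/n]^{1/2}$ for $q=1$, and a high-probability bound on $\epsilon$ of the stated form for $q=0$; the factor $\tilde\sigma^2 = \lambda_1^2/(\lambda_1-\lambda_2)^2$ enters on dividing the $\lambda_1$ from the concentration bounds by the curvature $\lambda_1-\lambda_2$, and $\epsilon\leq\sqrt 2$ supplies the $\min\{1,\cdot\}$. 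Integrating these tail bounds yields the stated bounds on $\E\epsilon^2$, and on $(\E\epsilon)^2$ for $q=0$, where the argument controls the tail of $\epsilon$ rather than of $\epsilon^2$. The main obstacle I anticipate is precisely this empirical-process step: matching the exponents of $n$, $p$ and $R_q$ in all three regimes at once requires the right $\ell_q$ interpolation inequality and a careful peeling, and the sub-exponential (not sub-Gaussian) tails of the entrywise products $X_{ij}X_{ik}$ force the Bernstein step to be carried out only in the range of $(n,p,R_q,\sigma^2)$ that Assumption~\autoref{assumption:lq-radius} guarantees.
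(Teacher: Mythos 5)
Your opening moves match the paper's: feasibility of $\theta_1$, the basic inequality, and a curvature argument (the paper's \Cref{lem:curvature}) yielding $(\lambda_1-\lambda_2)\epsilon^2/2 \leq \langle S-\Sigma,\ \hat\theta_1\hat\theta_1^T - \theta_1\theta_1^T\rangle$. From there, you split the right side via $v=\hat\theta_1-\theta_1$ into a linear term $2\langle(S-\Sigma)\theta_1,v\rangle$ and a quadratic term $v^T(S-\Sigma)v$; the paper never makes that split. For $q\in(0,1)$ the paper instead treats $\Delta=\hat\theta_1\hat\theta_1^T-\theta_1\theta_1^T$ as a vector in $\Real^{p^2}$ lying in an $\ell_q$ ball of radius $2R_q^2$, applies H\"older against $\norm{\vec(S-\Sigma)}_\infty$, runs the truncation argument on $\vec(\Delta)$, and obtains a quadratic inequality in $\epsilon$ that is solved deterministically (the ``$m<3$'' step) --- no peeling, no quadratic term, no restricted operator norm for $q\in(0,1)$ at all. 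For $q=1$ and $q=0$ the paper reduces to $\sup_b |b^T(S-\Sigma)b|$ by a triangle inequality (resp.\ support restriction plus Von Neumann) and then needs genuine restricted-operator-norm bounds (\Cref{lem:L1-deviation}, \Cref{lem:L0-Gaussian}) proved via Mendelson's empirical process theorem combined with Gordon's theorem.

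The genuine gap in your proposal is the claim that ``a Bernstein inequality together with a union bound over $O(p)$, resp.\ $O(p^2)$, terms'' yields ``a matching restricted-operator-norm bound.'' Entrywise concentration gives $\norm{S-\Sigma}_\infty \lesssim \lambda_1\sqrt{(\log p)/n}$, but the passage to the restricted norm is lossy: for $d$-sparse unit vectors, $|u^T(S-\Sigma)u|\leq \norm{u}_1^2\norm{S-\Sigma}_\infty\leq d\,\norm{S-\Sigma}_\infty \asymp d\lambda_1\sqrt{(\log p)/n}$, which overshoots the correct $\lambda_1\sqrt{(d/n)\log(p/d)}$ by a factor $\sqrt{d}$ and gets the log wrong; for $\ell_1$ balls the analogous H\"older-type reduction is similarly off. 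Getting the sharp rate requires a chaining/covering argument over the restricted sphere, which is precisely what \Cref{lem:L0-Gaussian} and (via Mendelson plus Gordon) \Cref{lem:L1-deviation} provide. A second complication you correctly anticipate but do not resolve: normalizing $v$ for the quadratic-term bound makes the effective $\ell_q$ radius scale like $R_q/\norm{v}_2^q$, which diverges as $\epsilon\to 0$, forcing a localization/peeling argument whose details are nontrivial. Your route is the linear-regression analysis transplanted, and it can in principle be pushed through, but it requires exactly these two missing ingredients (sharp restricted-norm concentration and a careful localization) that the paper's vectorization and direct solve avoid.
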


The proof of \Cref{thm:upper-bound} is given in 
\Cref{proof:upper-bound}. The different bounds for $q=0$, $q=1$, and $q\in
(0,1)$ are due to the different tools available for controlling empirical
processes in $\ell_q$ balls.
Comparing with \Cref{thm:lower-bound}, when
$q = 0$, the lower and upper bounds agree up to a factor
$\sqrt{\lambda_2/\lambda_1}$.  In the cases of $p=1$ and
$p\in(0,1)$, a lower bound in the squared error can be obtained by
using the fact $\E Y^2\ge (\E Y)^2$.  Therefore, 
over the class of distributions in 
$\modelclass{q}(\lambda_1, \lambda_2,\bar{R}_q, \alpha, \kappa)$ satisfying 
Assumptions~\autoref{assumption:lq-radius}~and~\autoref{assumption:subgaussian}, 
the upper and lower bound agree in terms of $(p,n)$ for all $q\in (0,1)$, 
and are sharp in $(p,n,R_q)$ for $q \in \{0,1\}$.

\section{Proofs of Main Results}
\label{sec:proofs}
We use the following notation in the proofs.
For matrices $A$ and $B$ whose dimensions are compatible, we define $\langle
A, B \rangle = \tr(A^T B)$. Then the Frobenius norm is $\norm{A}_F^2 = \langle
A, A \rangle$. The Kullback-Leibler (KL) divergence between two probability
measures $\prob_1,\prob_2$ is denoted by $\kldivergence{\prob_1}{\prob_2}$.

\subsection{Proof of the Lower Bound (\Cref{thm:lower-bound})}
\label{proof:lower-bound}
Our main tool for proving the minimax lower bound is the generalized Fano Method
\cite{Han:1994}.  The following version is from \cite[Lemma 3]{Yu:1997}.
\begin{lemma}[Generalized Fano method]
  \label{lem:fano}
  Let $N \geq 1$ be an integer and $\theta_1,\ldots,\theta_N \subset \Theta$ index a collection of
  probability measures $\prob_{\theta_i}$ on a measurable space $(\mathcal{X}, \mathcal{A})$.
  Let $d$ be a pseudometric on $\Theta$ and suppose that for all $i \neq j$
  \begin{equation*}
    d(\theta_i, \theta_j) \geq \alpha_N
  \end{equation*}
  and
  \begin{equation*}
    \kldivergence{\prob_{\theta_i}}{\prob_{\theta_j}} \leq \beta_N
    \,.
  \end{equation*}
  Then every $\mathcal{A}$-measurable estimator $\hat{\theta}$ satisfies
  \begin{equation*}
    \max_i \E_{\theta_i} d(\hat{\theta}, \theta_i)
    \geq \frac{\alpha_N}{2} \left(
      1 - \frac{\beta_N + \log 2}{\log N}
    \right)
    \,.
  \end{equation*}
\end{lemma}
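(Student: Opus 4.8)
The plan is to reduce the estimation problem to a multiway hypothesis testing problem and then invoke Fano's inequality. Fix an arbitrary $\mathcal{A}$-measurable estimator $\hat\theta$ and define from it the \emph{minimum-distance selector} $\psi = \psi(\hat\theta)$, taken to be an index $i \in \{1,\dots,N\}$ minimizing $d(\hat\theta,\theta_i)$ (ties broken by smallest index). The key observation is that, since $d(\theta_i,\theta_j)\ge\alpha_N$ for $i\ne j$, the triangle inequality forces $d(\hat\theta,\theta_i)\ge\alpha_N/2$ whenever $\psi\ne i$: if $\psi=j\ne i$ then $\alpha_N\le d(\theta_i,\theta_j)\le d(\hat\theta,\theta_i)+d(\hat\theta,\theta_j)\le 2\,d(\hat\theta,\theta_i)$, where the last step uses that $j$ minimizes the distance. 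Hence, by Markov's inequality under $\prob_{\theta_i}$,
\[
  \E_{\theta_i} d(\hat\theta,\theta_i)\ \ge\ \frac{\alpha_N}{2}\,\prob_{\theta_i}\bigl(\psi\ne i\bigr),
\]
and bounding a maximum by an average,
\[
  \max_i \E_{\theta_i} d(\hat\theta,\theta_i)\ \ge\ \frac{\alpha_N}{2}\cdot\frac1N\sum_{i=1}^N \prob_{\theta_i}\bigl(\psi\ne i\bigr).
\]

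It then remains to show that the average error probability on the right is at least $1-(\beta_N+\log2)/\log N$. For this I would introduce a random index $V$ uniform on $\{1,\dots,N\}$ and, conditionally on $V=i$, a sample $X\sim\prob_{\theta_i}$, so that the displayed average equals $\prob\bigl(\psi(X)\ne V\bigr)$. Fano's inequality for a uniform prior on $N$ hypotheses gives, for any test,
\[
  \prob\bigl(\psi(X)\ne V\bigr)\ \ge\ 1-\frac{I(V;X)+\log2}{\log N},
\]
where $I(V;X)$ is the mutual information. Finally I would bound $I(V;X)$ via the mixture $\bar\prob=\frac1N\sum_{j}\prob_{\theta_j}$: using $I(V;X)=\frac1N\sum_i \kldivergence{\prob_{\theta_i}}{\bar\prob}$ together with convexity of $Q\mapsto\kldivergence{\prob_{\theta_i}}{Q}$, Jensen's inequality yields $\kldivergence{\prob_{\theta_i}}{\bar\prob}\le\frac1N\sum_j\kldivergence{\prob_{\theta_i}}{\prob_{\theta_j}}\le\beta_N$ by hypothesis, hence $I(V;X)\le\beta_N$. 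Substituting into the previous displays gives the claim.

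The triangle-inequality reduction and the convexity bound on the mutual information are routine bookkeeping; the one genuinely nontrivial ingredient is Fano's inequality itself, which I expect to be the main obstacle if a fully self-contained proof is wanted. I would either cite it from a standard information-theory reference or derive it from the chain rule / data-processing inequality applied to the coarsening $X\mapsto 1\{\psi(X)\ne V\}$ together with the bound $H\bigl(V\mid\psi(X)\bigr)\le \log 2 + \prob\bigl(\psi(X)\ne V\bigr)\log N$. Since the statement here is exactly Lemma~3 of \cite{Yu:1997}, citing it directly is also legitimate, and the argument above merely records how it is applied in our setting.
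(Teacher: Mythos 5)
The paper does not prove this lemma; it simply cites it as Lemma~3 of \textcite{Yu:1997}. Your derivation is the standard one underlying that reference: reduction to multiway testing via a minimum-distance selector, the triangle-inequality/Markov step $\E_{\theta_i} d(\hat\theta,\theta_i)\ge\tfrac{\alpha_N}{2}\prob_{\theta_i}(\psi\ne i)$, Fano's inequality with a uniform prior, and the convexity bound $I(V;X)\le\tfrac1{N^2}\sum_{i,j}\kldivergence{\prob_{\theta_i}}{\prob_{\theta_j}}\le\beta_N$. All steps check out, so your proof is correct and consistent with the result the paper invokes by citation.
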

The method works by converting the problem from estimation to
testing by discretizing the parameter space, and then applying Fano's
Inequality to the testing problem. (The $\beta_N$ term that appears above is
an upper bound on the mutual information.) 

To be successful, we must find a sufficiently large finite subset of the 
parameter space such that the points in the subset are $\alpha_N$-separated 
under the loss, yet nearly indistinguishable under the KL 
divergence of the corresponding probability measures.  We will use the subset 
given by the following lemma.
\begin{lemma}[Local packing set]
  \label{lem:special-set}
  Let $\bar{R}_q = R_q - 1 \geq 1$ and $p \geq 5$.
  There exists a finite subset 
  $\Theta_{\epsilon} \subset \Sphere_2^{p-1} \cap \Ball_q^p(R_q)$
  and an absolute constant $c > 0$ such that
  every distinct pair $\theta_1,\theta_2 \in \Theta_{\epsilon}$ satisfies
  \begin{equation*}
    \epsilon/\sqrt{2} < \norm{\theta_1 - \theta_2}_2 \leq \sqrt{2} \epsilon
    \,,
  \end{equation*}
  and
  \begin{align*}
      \log\card{\Theta_{\epsilon}}
    \geq
      c
      \left( \frac{\bar{R}_q}{\epsilon^q} \right)^{\frac{2}{2-q}}
      \left[
        \log (p-1)
        - \log \left( \frac{\bar{R}_q}{\epsilon^q} \right)^{\frac{2}{2-q}}
      \right]
  \end{align*}
  for all $q \in [0,1]$ and $\epsilon \in (0, 1]$.
\end{lemma}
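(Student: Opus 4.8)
The plan is to build the packing set $\Theta_\epsilon$ in two stages: first construct a large packing set in a low-dimensional \emph{hard-sparse} model (vectors supported on a fixed number $s$ of coordinates with entries of equal magnitude), then enlarge $s$ and rescale the amplitude to land inside $\Sphere_2^{p-1} \cap \Ball_q^p(R_q)$ while keeping the pairwise $\ell_2$ separation of order $\epsilon$. Concretely, I would fix an integer $s$ (to be optimized) and consider vectors of the form $\theta = \gamma v$, where $v$ ranges over a subset of $\{0,1\}^{p-1}$ with exactly $s$ ones (leaving one coordinate free to absorb the unit-norm constraint, which is why $p-1$ appears throughout), and $\gamma$ is chosen so that $\norm{\theta}_2 = 1$, i.e.\ $\gamma = s^{-1/2}$. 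The $\ell_q$ constraint then reads $s \cdot \gamma^q = s^{1-q/2} \leq \bar R_q$, which forces $s \leq \bar R_q^{2/(2-q)}$; I will take $s$ to be a constant fraction of this bound. The pairwise $\ell_2$ distance between two such vectors differing in $2h$ coordinates is $\gamma\sqrt{2h} = \sqrt{2h/s}$, so to get separation $\asymp \epsilon$ I need $h \asymp \epsilon^2 s$, and to control the separation from \emph{above} as well I need the Hamming distances between all pairs to stay in a narrow band.

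The combinatorial core is a Gilbert–Varshamov / Varshamov–Gilbert style argument: I would invoke a standard lemma (e.g.\ the version in Massart's \emph{Concentration Inequalities} or the packing bound used by \textcite{Raskutti:2011}) giving a subset of $\binom{[p-1]}{s}$ of size $\exp(c\, s \log\frac{p-1}{s})$ whose members are pairwise Hamming-separated by at least $s/2$ (say) — and, if a two-sided bound is needed, I would instead take all $s$-subsets of a ground set of size $2s$ chosen from $[p-1]$ in $\binom{p-1}{2s}$ ways, so that any two differ in at most $2s$ coordinates automatically, while a VG argument inside each block gives separation $\geq s/2$. Rescaling, the $\ell_2$ distances all lie in $[\gamma\sqrt{s/2}, \gamma\sqrt{2s}] = [1/\sqrt2, \sqrt2]$ after choosing $\gamma$ so that the target is $\epsilon = 1$; for general $\epsilon \in (0,1]$ I shrink the amplitude, replacing $\gamma v$ by a vector whose nonzero entries have magnitude $\asymp \epsilon/\sqrt s$ and padding the leftover mass into a shared coordinate (or a fixed dense tail) so the vector stays on $\Sphere_2^{p-1}$; this costs only constants and does not change the $\ell_q$ budget by more than a constant since $q \leq 1$. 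Matching the claimed cardinality bound then amounts to identifying the right $s$: setting $\gamma^q s \asymp \bar R_q$ with amplitude $\gamma \asymp \epsilon/\sqrt{s}$ gives $s \asymp (\bar R_q/\epsilon^q)^{2/(2-q)}$, and substituting into $\log\card\Theta_\epsilon \gtrsim s\log\frac{p-1}{s}$ yields exactly $c\,(\bar R_q/\epsilon^q)^{2/(2-q)}\big[\log(p-1) - \log(\bar R_q/\epsilon^q)^{2/(2-q)}\big]$.

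The points I expect to be delicate, in order: (i) \emph{the unit-norm correction} — naively rescaling an equal-magnitude sparse $0/1$ vector by $1/\sqrt s$ puts it on the sphere for free, but once $\epsilon < 1$ I must reinject $1-\epsilon^2$ worth of $\ell_2$ mass somewhere, and I have to do this in a way that (a) keeps all pairwise distances trapped in $(\epsilon/\sqrt2,\sqrt2\epsilon]$, and (b) does not blow the $\ell_q$ ball; putting the extra mass on a \emph{single} shared coordinate is cleanest because it cancels in pairwise differences and adds at most $1$ to the $\ell_q$ sum (hence the $R_q = \bar R_q + 1$ bookkeeping); (ii) \emph{the feasibility range} — I need $s \geq 1$ and $s \leq (p-1)/\text{const}$ for the $\log\frac{p-1}{s}$ term to be positive and for the VG bound to apply, which is exactly what Assumption~\autoref{assumption:lq-radius}(b) and the hypothesis $\bar R_q \geq 1$, $p \geq 5$ are there to guarantee; I should check the edge cases $q=0$ (where $s = \bar R_q = R_0 - 1$ directly, no rescaling subtlety) and $q=1$ separately since the exponent $2/(2-q)$ degenerates nicely at both ends; (iii) \emph{uniformity over all $q \in [0,1]$ with a single absolute constant} — this forces me to avoid any VG lemma whose constant degrades as $q \to 1$, so I would state the combinatorial lemma with explicit universal constants and track them through the rescaling. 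Everything downstream (choosing $\epsilon$ to balance $\alpha_N^2 \asymp \epsilon^2$ against $\beta_N \asymp n\epsilon^2 \cdot (\lambda_1-\lambda_2)^2/(\lambda_1\lambda_2) = n\epsilon^2/\sigma^2$ in \Cref{lem:fano}) is routine optimization and belongs to the proof of \Cref{thm:lower-bound}, not here.
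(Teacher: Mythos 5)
Your proposal is essentially identical to the paper's proof: a Varshamov--Gilbert packing of $d$-sparse $\{0,1\}^{p-1}$ vectors (with $d$ tuned to $(\bar R_q/\epsilon^q)^{2/(2-q)}$), rescaled by $\epsilon/\sqrt d$ on the support, with the residual $\ell_2$ mass $\sqrt{1-\epsilon^2}$ placed on a single shared coordinate so that the unit-norm constraint holds and pairwise distances are unaffected. You correctly identify every delicate point the paper has to handle, including the $R_q = \bar R_q + 1$ bookkeeping for the shared coordinate, the automatic two-sided Hamming bound from fixing $\|\omega\|_0 = d$, and the feasibility range $1 \leq d \leq (p-1)/4$.
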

Fix $\epsilon \in (0, 1]$ and let $\Theta_\epsilon$ denote the set given by
\Cref{lem:special-set}. With \Cref{lem:L2-equivalence} we have
\begin{equation}
  \label{eq:critical-separation}
      \epsilon^2 / 2
    \leq
      \norm{\theta_1\theta_1^T - \theta_2\theta_2^T}_F^2
    \leq
      4 \epsilon^2
\end{equation}
for all distinct pairs $\theta_1,\theta_2 \in \Theta_\epsilon$.
For each $\theta \in \Theta_\epsilon$, let
\begin{equation*}
  \Sigma_\theta = (\lambda_1 - \lambda_2) \theta \theta^T + \lambda_2 \Id{p}
  \,.
\end{equation*}
Clearly, $\Sigma_{\theta}$ has eigenvalues $\lambda_1 > \lambda_2 = \cdots =
\lambda_p$. Then $\Sigma_\theta$ satisfies
\cref{eq:model}. Let $\prob_\theta$ denote the $n$-fold product of the 
$\normal(0, \Sigma_\theta)$ probability measure. 
We use the following lemma to help bound the KL divergence.
\begin{lemma}
  \label{lem:kldivergence}
  For $i=1,2$, let $x_i \in \Sphere_2^{p-1}$, $\lambda_1 > \lambda_2  > 0$,
  \begin{equation*}
    \Sigma_i
    = (\lambda_1 - \lambda_2) x_i x_i^T + \lambda_2 \Id{p}
    \,,
  \end{equation*}
  and $\prob_i$ be the $n$-fold product of the $\normal(0,
  \Sigma_i)$ probability measure.
  Then
  \begin{equation*}
    \kldivergence{\prob_1}{\prob_2}
    =
    \frac{n}{2 \sigma^2} \norm{x_1 x_1^T - x_2 x_2^T}_F^2\,,
  \end{equation*}
  where $\sigma^2 = \lambda_1\lambda_2 / (\lambda_1 - \lambda_2)^2$.
\end{lemma}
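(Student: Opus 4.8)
The plan is to reduce the whole computation to the standard closed form for the Kullback--Leibler divergence between two centered multivariate Gaussians. Since each $\prob_i$ is an $n$-fold product measure, the KL divergence tensorizes, so $\kldivergence{\prob_1}{\prob_2} = n\,\kldivergence{\normal(0,\Sigma_1)}{\normal(0,\Sigma_2)}$, and it suffices to compute the one-observation divergence $\frac12\big(\tr(\Sigma_2^{-1}\Sigma_1) - p + \log(\det\Sigma_2/\det\Sigma_1)\big)$. The key structural observation is that $\Sigma_1$ and $\Sigma_2$ are each a rank-one update of $\lambda_2\Id{p}$ along a unit vector, so both have eigenvalue $\lambda_1$ with multiplicity one and $\lambda_2$ with multiplicity $p-1$; in particular $\det\Sigma_1 = \det\Sigma_2$, the logarithmic term vanishes, and we are left with $\frac12\big(\tr(\Sigma_2^{-1}\Sigma_1) - p\big)$.

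Next I would invert $\Sigma_2$ via the Sherman--Morrison formula, using $\norm{x_2}_2 = 1$ and $\lambda_2 + (\lambda_1 - \lambda_2) = \lambda_1$, to get $\Sigma_2^{-1} = \lambda_2^{-1}\Id{p} - \frac{\lambda_1 - \lambda_2}{\lambda_1\lambda_2}\,x_2 x_2^T$. Multiplying by $\Sigma_1 = (\lambda_1 - \lambda_2)x_1 x_1^T + \lambda_2\Id{p}$, expanding the four resulting terms, and taking the trace (using $\tr(x_i x_i^T) = 1$, $\tr(x_2 x_1^T) = \langle x_1, x_2\rangle$, and the algebraic identity $\frac{\lambda_1 - \lambda_2}{\lambda_2} - \frac{\lambda_1 - \lambda_2}{\lambda_1} = \frac{(\lambda_1 - \lambda_2)^2}{\lambda_1\lambda_2}$) collapses the terms not involving $p$, giving $\tr(\Sigma_2^{-1}\Sigma_1) - p = \frac{(\lambda_1 - \lambda_2)^2}{\lambda_1\lambda_2}\big(1 - \langle x_1, x_2\rangle^2\big) = \sigma^{-2}\big(1 - \langle x_1, x_2\rangle^2\big)$.

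Finally I would convert the inner-product quantity into the Frobenius loss: expanding $\norm{x_1 x_1^T - x_2 x_2^T}_F^2$ and using $\norm{x_i x_i^T}_F^2 = 1$ together with $\langle x_1 x_1^T, x_2 x_2^T\rangle = \langle x_1, x_2\rangle^2$ yields $\norm{x_1 x_1^T - x_2 x_2^T}_F^2 = 2\big(1 - \langle x_1, x_2\rangle^2\big)$. Combining the tensorization identity, the Gaussian KL formula, the trace computation, and this last identity then gives the stated formula for $\kldivergence{\prob_1}{\prob_2}$.

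There is no serious obstacle here; the argument is entirely routine linear algebra. The only two points that need a moment's care are verifying that $\det\Sigma_1 = \det\Sigma_2$ (so the log-determinant term cancels exactly rather than merely approximately) and carrying out the Sherman--Morrison inversion and the ensuing cancellations without slips in the bookkeeping of constants.
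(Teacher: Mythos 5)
Your proposal is correct and follows essentially the same route as the paper: tensorize, invoke the closed-form Gaussian KL, kill the log-determinant term because $\Sigma_1$ and $\Sigma_2$ are isospectral, and reduce to a trace. Your Sherman--Morrison inverse $\Sigma_2^{-1} = \lambda_2^{-1}\Id{p} - \frac{\lambda_1 - \lambda_2}{\lambda_1\lambda_2}x_2x_2^T$ is algebraically identical to the paper's spectral form $\lambda_2^{-1}(\Id{p} - x_2x_2^T) + \lambda_1^{-1}x_2x_2^T$, so the only genuine divergence is in the final conversion: the paper routes the trace through $\tr((\Id{p}-A_2)A_1) = \norm{A_1(\Id{p}-A_2)}_F^2 = \tfrac12\norm{A_1 - A_2}_F^2$ via the canonical-angles lemma (their Lemma~\ref{lem:canonical_angles}), whereas you simply compute $\tr(\Sigma_2^{-1}\Sigma_1) - p = \sigma^{-2}(1 - \langle x_1,x_2\rangle^2)$ and then use the elementary identity $\norm{x_1x_1^T - x_2x_2^T}_F^2 = 2(1 - \langle x_1,x_2\rangle^2)$. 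Your version is slightly more self-contained since it avoids invoking the CS-decomposition machinery, which is a reasonable trade-off for the rank-one case. One thing you should not have glossed over: if you actually carry the arithmetic to the end, $\frac{n}{2}\big(\tr(\Sigma_2^{-1}\Sigma_1) - p\big) = \frac{n}{2}\cdot\sigma^{-2}(1 - \langle x_1,x_2\rangle^2) = \frac{n}{4\sigma^2}\norm{x_1x_1^T - x_2x_2^T}_F^2$, which is a factor of $2$ smaller than the stated $\frac{n}{2\sigma^2}$. The paper's own proof has the same issue --- its final displayed trace identity also leads to $\frac{n}{4\sigma^2}$ --- so the lemma statement carries a harmless constant-factor typo (harmless because it is used only to produce an upper bound in the Fano argument). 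The point is that you wrote ``gives the stated formula'' without doing the last multiplication; had you done it, you would have caught the discrepancy.
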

Applying this lemma with \cref{eq:critical-separation} gives
\begin{align*}
    \kldivergence{\prob_{\theta_1}}{\prob_{\theta_2}}
  =
    \frac{n}{2\sigma^2} \norm{\theta_1\theta_1^T - \theta_2\theta_2^T}_F^2
  \leq
    \frac{2 n \epsilon^2}{\sigma^2}
  \,.
\end{align*}
Thus, we have found a subset of the parameter space that conforms to the requirements of
\Cref{lem:fano}, and so
\begin{equation*}
  \max_{\theta \in \Theta_\epsilon} \E_{\theta} \norm{\hat{\theta} \hat{\theta}^T - \theta \theta}_F
  \geq
  \frac{\epsilon}{2\sqrt{2}}
  \left(
    1 -
    \frac
    {2n\epsilon^2/\sigma^2 + \log 2}
    {\log \card{\Theta_\epsilon}}
  \right)
\end{equation*}
for all $\epsilon \in (0, 1]$.
The final step is to choose $\epsilon$ of the correct order.
If we can find $\epsilon$ so that
\begin{equation}
  \label{eq:fano-part-1}
  \frac{2n\epsilon^2/\sigma^2}{\log \card{\Theta_\epsilon}}
  \leq \frac{1}{4}
\end{equation}
and
\begin{equation}
  \label{eq:fano-part-2}
  \log \card{\Theta_\epsilon}
  \geq 4 \log 2
  \,,
\end{equation}
then we may conclude that
\begin{equation*}
  \max_{\theta \in \Theta_\epsilon} \E_{\theta} \norm{\hat{\theta} \hat{\theta}^T - \theta \theta}_F
  \geq
  \frac{\epsilon}{4\sqrt{2}}
  \,.
\end{equation*}
For a constant $C \in (0,1)$ to be chosen later, let
\begin{equation}\label{eq:choose_epsilon}
  \epsilon^2
  =
    \min\left\{
      1,
      C^{2-q}
      \bar{R}_q
      \left[
          \frac{\sigma^2}{n}
          \log \frac{p-1}{\bar{R}_q^{\frac{2}{2-q}}}
      \right]^{1-\frac{q}{2}}
    \right\}
  \,.
\end{equation}
We consider each of the two cases in the above $\min\{\cdots\}$ separately.
\paragraph{Case 1:}
Suppose that
\begin{equation}
  \label{eq:lower-bound-min-case-1}
    1
  \leq
    C^{2-q} \bar{R}_q
    \left[
        \frac{\sigma^2}{n}
        \log \frac{p-1}{\bar{R}_q^{\frac{2}{2-q}}}
    \right]^{1-\frac{q}{2}}
  \,.
\end{equation}
Then $\epsilon^2 = 1$ and by rearranging \eqref{eq:lower-bound-min-case-1}
\begin{equation*}
  \frac{n}{C^2 \sigma^2}
  \leq
  \bar{R}_q^{\frac{2}{2-q}}
  \left[
      \log(p-1)
      - \log \bar{R}_q^{\frac{2}{2-q}}
  \right]
  \,.
\end{equation*}
So by \Cref{lem:special-set},
\begin{align*}
    \log\card{\Theta_\epsilon}
  \geq
    c
    \bar{R}_q^{\frac{2}{2-q}}
    \left[
      \log(p-1) - \log \bar{R}_q^{\frac{2}{2-q}}
    \right]
  \geq 
    \frac{c n}{C^2 \sigma^2}
  \,.
\end{align*}
If we choose $C^2 \leq c / 16$, then
\begin{align*}
  \frac{2n\epsilon^2/\sigma^2}{\log \card{\Theta_\epsilon}}
  \leq
  \frac{4 C^2}{c}
  \leq \frac{1}{4}
  \,.
\end{align*}
To lower bound
\begin{align*}
    \log\card{\Theta_\epsilon}
  &\geq
    c
    \bar{R}_q^{\frac{2}{2-q}}
    \left[
      \log(p-1) - \log \bar{R}_q^{\frac{2}{2-q}}
    \right]
  \,,
\end{align*}
observe that the function $x \mapsto x \log[(p-1)/x]$ is increasing on $[1,(p-1)/e]$,
and, by Assumption~\autoref{assumption:lq-radius}, this interval contains $\bar{R}_q^{2/(2-q)}$.
If $p$ is large enough so that $p-1 \geq \exp\{(4/c)\log 2\}$, then
\begin{equation*}
  \log\card{\Theta_\epsilon}
  \geq c \log(p-1)
  \geq 4 \log 2
  \,.
\end{equation*}
Thus, \cref{eq:fano-part-1,eq:fano-part-2} are satisfied, and we conclude that
\begin{equation*}
  \max_{\theta \in \Theta_\epsilon} \E_{\theta} \norm{\hat{\theta} \hat{\theta}^T - \theta \theta}_F
  \geq
  \frac{\epsilon}{4\sqrt{2}}
  \,,
\end{equation*}
as long as $C^2 \leq c/16$ and $p-1 \geq \exp\{ (4/c) \log 2 \}$.

\paragraph{Case 2:}
Now let us suppose that
\begin{equation}\label{eq:case_2}
    1
  >
    C^{2-q} \bar{R}_q
    \left[
        \frac{\sigma^2}{n}
        \log \frac{p-1}{\bar{R}_q^{\frac{2}{2-q}}}
    \right]^{1-\frac{q}{2}}
  \,.
\end{equation}
Then
\begin{align}\label{eq:case_2_eps_identity}
  \left( \frac{\bar{R}_q}{\epsilon^q} \right)^{\frac{2}{2-q}}
  &=
    \frac{\bar{R}_q}{C^q}
    \left[
      \frac{\sigma^2}{n}
      \log \frac{p-1}{\bar{R}_q^{\frac{2}{2-q}}}
    \right]^{-\frac{q}{2}}
  \,,
\end{align}
and it is straightforward to check that Assumption~\autoref{assumption:lq-radius}
implies that if $C^q \geq \kappa^q$, then there is $\alpha \in (0,1]$, depending only on $q$,
such that
\begin{align}\label{eq:case_2_eps_ineq}
  \left( \frac{1}{\epsilon^q} \right)^{\frac{2}{2-q}}
  &\leq
  \left( \frac{p-1}{\bar{R}_q^{\frac{2}{2-q}}} \right)^{1-\alpha}
  \,.
\end{align}
So by \Cref{lem:special-set},
\begin{align}\label{eq:case_2_log_Theta_1}
  &
    \log\card{\Theta_\epsilon}\nonumber
  \\
  &\geq
    c
    \left( \frac{\bar{R}_q}{\epsilon^q} \right)^{\frac{2}{2-q}}
    \left[
      \log\frac{p-1}{\bar{R}_q^{\frac{2}{2-q}}}
      - \log \left( \frac{1}{\epsilon^q} \right)^{\frac{2}{2-q}}
    \right]\nonumber
  \\
  &\geq
  c \alpha \frac{\bar{R}_q}{C^q}
  \left[
    \frac{\sigma^2}{n}
    \log \frac{p-1}{\bar{R}_q^{\frac{2}{2-q}}}
  \right]^{-\frac{q}{2}}
  \left[ \log\frac{p-1}{\bar{R}_q^{\frac{2}{2-q}}} \right]
  \,,
\end{align}
where the last inequality is obtained by plugging in
(\ref{eq:case_2_eps_identity}) and (\ref{eq:case_2_eps_ineq}).

If we choose $C^2 \leq c\alpha/16$, then combining (\ref{eq:choose_epsilon}) and (\ref{eq:case_2_log_Theta_1}), we have
\begin{align}\label{eq:case_2_log_Theta_2}
    \frac{\frac{4 n \epsilon^2}{\sigma^2}}{\log\card{\Theta_\epsilon}}
  &\leq
    \frac{ 4 C^2 }{ c \alpha }
  \leq \frac{1}{4}
\end{align}
and \cref{eq:fano-part-1} is satisfied.
On the other hand, by (\ref{eq:case_2}) and the fact that $\bar R_q\ge 1$,
we have
\begin{equation*}
  C^{-q}\left[\frac{\sigma^2}{n}\log\frac{p-1}{\bar R_q^{\frac{2}{2-q}}}\right]^{-\frac{q}{2}}\ge 1\,,
\end{equation*}
and hence (\ref{eq:case_2_log_Theta_1}) becomes
\begin{align}\label{eq:case_2_log_Theta_3}
  \log\card{\Theta_\epsilon}
  &\geq
  c \alpha
  \bar{R}_q \left[ \log\frac{p-1}{\bar{R}_q^{\frac{2}{2-q}}} \right]
  \,.
\end{align}
The function $x \mapsto x \log[(p-1) / x^{2/(2-q)}]$ is increasing on
$[1, (p-1)^{1-q/2} / e]$ and,
by Assumption~\autoref{assumption:lq-radius}, $1 \leq \bar{R}_q \leq (p-1)^{1-q/2}/e$.
If $p-1 \geq \exp\{ [4/(c\alpha)] \log 2\}$,
then
\begin{align*}
  \log\card{\Theta_\epsilon}
  \geq
  c \alpha \log(p-1)
  \geq 4\log 2
\end{align*}
and \cref{eq:fano-part-2} is satisfied.  So we can conclude that
\begin{equation*}
  \max_{\theta \in \Theta_\epsilon} \E_{\theta} \norm{\hat{\theta} \hat{\theta}^T - \theta \theta}_F
  \geq
  \frac{\epsilon}{4\sqrt{2}}
  \,,
\end{equation*}
as long as $C^2 \leq c\alpha/16$ and $p-1 \geq \exp\{ [4/(c\alpha)] \log 2 \}$.

\paragraph{Cases 1 and 2 together:}
Looking back at cases 1 and 2, we see that because $\alpha \leq 1$, the
conditions that 
$\kappa^2 \leq C^2 \leq c \alpha / 16$
and 
$p-1 \geq \exp\{ [4/(c\alpha)] \log 2 \}$
are sufficient to ensure that
\begin{align*}
  &
    \max_{\theta \in \Theta_\epsilon} \E_{\theta} \norm{\hat{\theta} \hat{\theta}^T - \theta \theta}_F
  \\
  &\geq
  c'
  \min\left\{
    1,
    \bar{R}_q^{\frac{1}{2}}
    \left[
        \frac{\sigma^2}{n}
        \log \frac{p-1}{\bar{R}_q^{\frac{2}{2-q}}}
    \right]^{\frac{1}{2}-\frac{q}{4}}
  \right\}
  \,,
\end{align*}
for a constant $c' > 0$ depending only on $q$.
\qed

\subsection{Proof of the Upper Bound (\Cref{thm:upper-bound})}
\label{proof:upper-bound}
We begin with a lemma that bounds the curvature
of the matrix functional $\langle \Sigma, bb^T \rangle$.
\begin{lemma}
  \label{lem:curvature}
  Let $\theta \in \Sphere_2^{p-1}$. If $\Sigma \succeq 0$ has a unique largest 
  eigenvalue $\lambda_1$ with corresponding eigenvector $\theta_1$, then
  \begin{equation*}
    \frac{1}{2}
    (\lambda_1 - \lambda_2)
    \norm{\theta \theta^T - \theta_1 \theta_1^T}_F^2 
    \leq
    \langle
      \Sigma, \theta_1 \theta_1^T - \theta \theta^T
    \rangle
    \,.
  \end{equation*}
\end{lemma}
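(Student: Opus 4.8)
The plan is a short computation carried out in the eigenbasis of $\Sigma$. First I would rewrite both sides in terms of the single quantity $a_1 := \langle \theta, \theta_1 \rangle$. Since $\theta$ and $\theta_1$ are unit vectors, expanding the squared Frobenius norm and using $\langle \theta\theta^T, \theta_1\theta_1^T \rangle = \tr(\theta\theta^T\theta_1\theta_1^T) = \langle \theta, \theta_1 \rangle^2$ gives
\begin{equation*}
  \norm{\theta\theta^T - \theta_1\theta_1^T}_F^2
  = \norm{\theta\theta^T}_F^2 + \norm{\theta_1\theta_1^T}_F^2 - 2\langle \theta\theta^T, \theta_1\theta_1^T \rangle
  = 2\bigl(1 - a_1^2\bigr)
  \,.
\end{equation*}
On the other side, $\langle \Sigma, \theta_1\theta_1^T \rangle = \theta_1^T \Sigma \theta_1 = \lambda_1$, so the right-hand side of the claimed inequality is $\langle \Sigma, \theta_1\theta_1^T - \theta\theta^T \rangle = \lambda_1 - \theta^T \Sigma \theta$, and the whole statement reduces to showing $(\lambda_1 - \lambda_2)(1 - a_1^2) \le \lambda_1 - \theta^T\Sigma\theta$.

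Next I would diagonalize. Writing $\theta = \sum_{j=1}^p a_j \theta_j$ with $a_j = \langle \theta, \theta_j \rangle$ and $\sum_{j=1}^p a_j^2 = 1$ (the $\theta_j$ form an orthonormal basis by the spectral decomposition of $\Sigma$), we have $\theta^T \Sigma \theta = \sum_{j=1}^p \lambda_j a_j^2$, hence
\begin{equation*}
  \lambda_1 - \theta^T\Sigma\theta
  = \sum_{j=1}^p (\lambda_1 - \lambda_j) a_j^2
  = \sum_{j=2}^p (\lambda_1 - \lambda_j) a_j^2
  \geq (\lambda_1 - \lambda_2) \sum_{j=2}^p a_j^2
  = (\lambda_1 - \lambda_2)\bigl(1 - a_1^2\bigr)
  \,,
\end{equation*}
where the $j=1$ term drops out and each factor satisfies $\lambda_1 - \lambda_j \geq \lambda_1 - \lambda_2 \geq 0$ because $\lambda_2 \geq \lambda_j$ for $j \geq 2$. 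Combining this with the Frobenius-norm identity above yields the assertion.

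There is essentially no obstacle: the only points worth a remark are that the expansion of the Frobenius norm uses $\norm{\theta}_2 = \norm{\theta_1}_2 = 1$, and that uniqueness of $\lambda_1$ is not needed for the inequality itself (it is only invoked elsewhere so that $\theta_1$ is well defined). In fact the computation shows equality holds whenever $\theta$ lies in the span of $\theta_1, \theta_2$, so the constant $\tfrac12$ is sharp and cannot be improved in general.
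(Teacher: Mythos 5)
Your proof is correct, and it takes a genuinely different route from the paper. You reduce both sides to the scalar $a_1 = \langle\theta,\theta_1\rangle$ by diagonalizing in the eigenbasis of $\Sigma$: the identity $\|\theta\theta^T - \theta_1\theta_1^T\|_F^2 = 2(1-a_1^2)$ on the left, and $\langle\Sigma,\theta_1\theta_1^T - \theta\theta^T\rangle = \sum_{j\ge 2}(\lambda_1-\lambda_j)a_j^2$ on the right, after which the inequality is just $\lambda_1 - \lambda_j \ge \lambda_1-\lambda_2$ termwise. The paper instead manipulates trace identities with the complementary projections $\Id{p}-\theta\theta^T$ and $\Id{p}-\theta_1\theta_1^T$, bounds the two resulting traces by $\lambda_1$ and $\lambda_2$ respectively, and finishes by invoking the canonical-angles lemma (their Lemma on CS decomposition) to convert $\|\theta\theta^T(\Id{p}-\theta_1\theta_1^T)\|_F^2$ into $\tfrac12\|\theta\theta^T-\theta_1\theta_1^T\|_F^2$. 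Your computation is shorter and more transparent for the rank-one case, and it cleanly exhibits sharpness (equality on $\mathrm{span}\{\theta_1,\theta_2\}$). The paper's projection-based argument is structured so that it lifts with little change to the $k>1$ subspace setting the authors flag as an extension, where the scalar-coordinate reduction would no longer be a one-liner. Both are valid; the tradeoff is elementarity for $k=1$ versus generality for $k>1$.
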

Now consider $\hat{\theta}_1$, the $\ell_q$-constrained sparse PCA estimator of $\theta_1$. Let $\epsilon = \norm{\hat{\theta}_1 \hat{\theta}_1^T - \theta_1 \theta_1^T}_F$.
Since $\theta_1 \in \Sphere_2^{p-1}$, it follows from Lemma
\ref{lem:curvature} that
\begin{align}
  \notag
    (\lambda_1 - \lambda_2)
    \epsilon^2 / 2
  \notag
  &\leq
    \langle \Sigma, \theta_1 \theta_1^T - \hat{\theta}_1 \hat{\theta}_1^T \rangle
  \\
  \notag
  &=
    \langle S, \theta_1 \theta_1^T \rangle
    -
    \langle \Sigma, \hat{\theta}_1 \hat{\theta}_1^T \rangle
    -
    \langle S - \Sigma, \theta_1 \theta_1^T \rangle
  \\
  \notag
  &\leq
    \langle S - \Sigma, \hat{\theta}_1 \hat{\theta}_1^T \rangle
    -
    \langle S - \Sigma, \theta_1 \theta_1^T \rangle
  \\
  \label{eq:L0-branch}
  &=
    \langle S - \Sigma, \hat{\theta}_1 \hat{\theta}_1^T - \theta_1 \theta_1^T \rangle
  \,.
\end{align}
We consider the cases $q \in (0,1)$, $q = 1$, and $q = 0$ separately.

\subsubsection{Case 1: $q \in (0, 1)$}
By applying H\"{o}lder's Inequality
to the right side of \cref{eq:L0-branch} and rearranging, we have
\begin{equation}
  \label{eq:L1-upperbound}
    \epsilon^2 / 2
  \leq
    \frac{\norm{\vec(S-\Sigma)}_\infty}{\lambda_1 - \lambda_2}
    \norm{\vec(\theta_1 \theta_1^T - \hat{\theta}_1\hat{\theta}_1^T )}_1
  \,,
\end{equation}
where $\vec(A)$ denotes the $1\times p^2$ matrix obtained by stacking the
columns of a $p\times p$ matrix $A$.
Since $\theta_1$ and $\hat{\theta}_1$ both belong to $\Ball_q^p(R_q)$,
\begin{align*}
    \norm{\vec(\theta_1 \theta_1^T - \hat{\theta}_1\hat{\theta}_1^T )}_q^q
  &\leq
    \norm{\vec(\theta_1 \theta_1^T)}_q^q + \norm{\vec(\hat{\theta}_1\hat{\theta}_1^T )}_q^q
  \\
  &\leq
    2 R_q^2
  \,.
\end{align*}
Let $t > 0$. We can use a standard truncation argument \cite[see,
e.g.,][Lemma 5]{Raskutti:2011} to show that
\begin{align*}
  &
    \norm{\vec(\theta_1 \theta_1^T - \hat{\theta}_1\hat{\theta}_1^T )}_1
  \\
  &\leq
    \sqrt{2} R_q \norm{\vec(\theta_1 \theta_1^T - \hat{\theta}_1\hat{\theta}_1^T)}_2 t^{-q/2}
    + 2 R_q^2 t^{1-q}
  \\
  &=
    \sqrt{2} R_q \norm{\theta_1 \theta_1^T - \hat{\theta}_1\hat{\theta}_1^T}_F t^{-q/2}
    + 2 R_q^2 t^{1-q}
  \\
  &=
    \sqrt{2} R_q \epsilon t^{-q/2} + 2 R_q^2 t^{1-q}
  \,.
\end{align*}
Letting $t = \norm{\vec(S-\Sigma)}_\infty / (\lambda_1 - \lambda_2)$ and joining with
\cref{eq:L1-upperbound} gives us
\begin{equation*}
  \label{eq:quadratic-upper-bound}
    \epsilon^2/2
  \leq
    \sqrt{2} t^{1-q/2} R_q \epsilon
    + 2 t^{2-q} R_q^2
  \,.
\end{equation*}
If we define $m$ implicitly so that $\epsilon = m \sqrt{2} t^{1-q/2} R_q$,
then the preceding inequality reduces to $m^2/2 \leq m+1$.  If $m \geq 3$,
then this is violated.  So we must have $m < 3$ and hence
\begin{align}
    \epsilon
  \leq 
    3 \sqrt{2} t^{1-q/2} R_q
  \label{eq:final-upper-bound}
  =
    3 \sqrt{2} R_q
    \left(\frac{\norm{\vec(S-\Sigma)}_\infty}{\lambda_1 - \lambda_2} \right)^{1 - q/2}
  \,.
\end{align}
Combining the above discussion with the sub-Gaussian assumption, the next lemma allows us to bound $\norm{\vec(S-\Sigma)}_\infty$.
\begin{lemma}
  \label{lem:Linf-deviation}
  If Assumption~\autoref{assumption:subgaussian} holds 
  and $\Sigma$ satisfies \eqref{eq:pop-spectraldecomposition}, then there is an absolute constant $c > 0$ such that
  \begin{equation*}
    \bignorm{ \norm{\vec(S - \Sigma)}_\infty }_{\psi_1}
    \leq c K^2 \lambda_1 \max\left\{\sqrt{\frac{\log p}{n}}, \frac{\log p}{n} \right\}
    \,.
  \end{equation*}
\end{lemma}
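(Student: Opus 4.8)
The plan is to reduce the supremum over the $p^2$ entries of $S-\Sigma$ to a maximum over a single-index family of sub-exponential random variables, and then apply a maximal inequality for the $\psi_1$-norm. First I would write each entry as $(S-\Sigma)_{jk} = \tfrac1n\sum_{i=1}^n (X_{ij}X_{ik} - \E X_{ij}X_{ik}) - (\bar X_j\bar X_k - \E X_j \E X_k)$; the $\bar X_j \bar X_k$ cross terms are lower order (each $\bar X_j$ concentrates at rate $\sqrt{1/n}$, so their product contributes at the $\log p/n$ scale already present in the bound), so the heart of the matter is the centered quadratic term. Using Assumption~\autoref{assumption:subgaussian}, $X_i = \mu + \Sigma^{1/2}Z_i$ with $\langle Z_i, x\rangle$ uniformly sub-Gaussian, so for any fixed unit vectors $u,v$ the scalar $\langle \Sigma^{1/2}Z_i, u\rangle$ has $\psi_2$-norm at most $K\|\Sigma^{1/2}u\|_2 \le K\lambda_1^{1/2}$, and likewise for $v$. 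Hence $\langle X_i - \mu, u\rangle\langle X_i-\mu, v\rangle$ is a product of two $\psi_2(K\lambda_1^{1/2})$ variables, which by the standard fact $\|YW\|_{\psi_1}\le \|Y\|_{\psi_2}\|W\|_{\psi_2}$ has $\psi_1$-norm $\lesssim K^2\lambda_1$; after centering the $\psi_1$-norm is still $\lesssim K^2\lambda_1$.

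The next step is a Bernstein-type bound for sums of independent centered $\psi_1$ variables: $\tfrac1n\sum_{i=1}^n \xi_i$ with $\|\xi_i\|_{\psi_1}\le L$ satisfies $\Pr\big(|\tfrac1n\sum \xi_i| > L t\big) \le 2\exp\big(-cn\min\{t^2, t\}\big)$, equivalently $\|\tfrac1n\sum\xi_i\|_{\psi_1}$-type tail control giving deviations of order $L\max\{\sqrt{\log p/n}, \log p/n\}$ once we union-bound. Concretely, taking $u = e_j$, $v = e_k$ gives the diagonal-free entries of $S-\Sigma$ (up to the mean-correction term). Then I would invoke the maximal inequality from \cite[Chapter 2.2]{vanderVaartAndWellner}: for random variables $W_1,\dots,W_m$ with $\max_\ell \|W_\ell\|_{\psi_1}\le B$, one has $\|\max_\ell |W_\ell|\|_{\psi_1}\lesssim B\log(1+m)$. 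Applying this with $m = p^2$ and $W_{jk}$ the normalized sums would give a $\log(p^2)\asymp \log p$ factor — but that is too crude; instead I would apply it to the \emph{already-Bernstein-controlled} tail, i.e. note that each entry $(S-\Sigma)_{jk}$ has a mixed sub-Gaussian/sub-exponential tail $\exp(-cn\min\{(s/(K^2\lambda_1))^2, s/(K^2\lambda_1)\})$, so the maximum over $p^2$ entries inflates the threshold by $\log p$ in the appropriate regime, yielding exactly $\|\max_{j,k}|(S-\Sigma)_{jk}|\|_{\psi_1}\lesssim K^2\lambda_1\max\{\sqrt{\log p/n}, \log p/n\}$.

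The main obstacle is getting the $\psi_1$-norm of the maximum right rather than just a high-probability bound, and in particular threading the two-regime $\max\{\sqrt{\log p/n},\log p/n\}$ through the maximal inequality: a naive application of the $\psi_1$ maximal inequality to the summands $\xi_i$ themselves would lose a factor of $n$, so one must first form the averages, observe they are sub-exponential with parameters $(K^2\lambda_1/\sqrt n, K^2\lambda_1/n)$ in the sense of Bernstein, and then take the max. I would handle this by working directly with the tail bound: the event $\{\max_{j,k}|(S-\Sigma)_{jk}| > s\}$ has probability at most $p^2 \cdot 2\exp(-cn\min\{(s/(cK^2\lambda_1))^2, s/(cK^2\lambda_1)\})$, and a direct computation of $\inf\{c': \E\exp(|\cdot|/c')\le 2\}$ from this tail gives the stated bound. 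The $\bar X\bar X^T$ correction term is dispatched the same way — $\|\bar X_j\|_{\psi_2}\lesssim K\lambda_1^{1/2}/\sqrt n$ so $\|\bar X_j \bar X_k\|_{\psi_1}\lesssim K^2\lambda_1/n$, which is absorbed into the $\log p/n$ term — and then the triangle inequality for $\|\cdot\|_{\psi_1}$ combines the two pieces.
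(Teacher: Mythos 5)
Your proposal follows essentially the same route as the paper's proof: decompose $(S-\Sigma)_{ab}$ into the centered average $D_{ab}$ and the correction $\bar X_a\bar X_b$, bound the $\psi_1$-norm of each summand via sub-Gaussianity of $\langle X_i-\mu, e_a\rangle$ (you use $\|YW\|_{\psi_1}\lesssim\|Y\|_{\psi_2}\|W\|_{\psi_2}$, the paper uses $2|ab|\le a^2+b^2$ plus $\|Y^2\|_{\psi_1}\lesssim\|Y\|_{\psi_2}^2$, which are equivalent in effect), apply Bernstein to get a mixed sub-Gaussian/sub-exponential tail at the correct scale, and then turn the tail bound over $p^2$ entries into a $\psi_1$-norm bound on the maximum. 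The only substantive point you leave as a sketch — converting a Bernstein-type tail for the maximum into a $\psi_1$-norm with the two-regime $\max\{\sqrt{\log p/n},\log p/n\}$ factor — is precisely the content of Lemma~2.2.10 in van der Vaart and Wellner, which the paper invokes directly; your stated plan to ``work directly with the tail bound'' is correct but would just reprove that lemma.
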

Applying \Cref{lem:Linf-deviation} to \cref{eq:final-upper-bound} gives
\begin{align*}
  &
      \norm{\epsilon^{2/(2-q)}}_{\psi_1}
  \\
  &\leq
      c R_q^{2/(2-q)} \frac{\bignorm{ \norm{\vec(S-\Sigma)}_\infty }_{\psi_1}}{\lambda_1 - \lambda_2}
  \\
  &\leq
      c K^2 R_q^{2/(2-q)} \tilde{\sigma}
      \max\left\{\sqrt{\frac{\log p}{n}}, \frac{\log p}{n} \right\}
  \,.
\end{align*}
The fact that $\E|X|^m \leq (m!)^m \norm{X}_{\psi_1}^m$
for $m \geq 1$ \cite[see][Chapter 2.2]{vanderVaartAndWellner} 
implies the following bound:
\begin{equation*}
    \E \epsilon^2 
  \leq
    c_K R_q^2 \tilde{\sigma}^{2-q}
    \max\left\{\sqrt{\frac{\log p}{n}}, \frac{\log p}{n} \right\}^{2-q}
  =:
    \mathbf{M}
    \,,
\end{equation*} 
Combining this with the trivial bound $\epsilon \leq 2$, yields
\begin{equation}
  \label{eq:trivial-bound}
  \E \epsilon^2 \leq \min(2, \mathbf{M})
  \,.
\end{equation}
If $\log p > n$, then $\E\epsilon^2 \leq 2$. Otherwise, we need only consider 
the square root term inside $\max\{\}$ in the definition of $\mathbf{M}$. 
Thus,
\begin{align*}
    \E\epsilon^2 
  &\leq c
  \min \left\{ 1 \,,\,
      R_q^2 \Biggr[ \frac{\tilde{\sigma}^2}{n} \log p \Biggr]^{1-\frac{q}{2}}
    \right\}
  \,.
\end{align*}
for an appropriate constant $c > 0$, depending only on $K$.
This completes the proof for the case $q \in (0, 1)$.

\subsubsection{Case 2: $q = 1$}
$\theta_1$ and $\hat{\theta}_1$ both belong to $\Ball_1^p(R_1)$.  So
applying the triangle inequality to the right side of \cref{eq:L0-branch} yields
\begin{align*}
  (\lambda_1 - \lambda_2)
  \epsilon^2 / 2
  &\leq
    \langle S - \Sigma, \hat{\theta}_1 \hat{\theta}_1^T - \theta_1 \theta_1^T \rangle
  \\
  &\leq
    |\hat{\theta}_1^T(S - \Sigma) \hat{\theta}_1|
    +
    |\theta_1^T(S - \Sigma) \theta_1|
  \\
  &\leq
    2 \sup_{b \in \Sphere_2^{p-1} \cap \Ball_1^p(R_1)}
    |b^T (S-\Sigma) b|
  \,.
\end{align*}
The next lemma provides a bound for the supremum. 
\begin{lemma} 
  \label{lem:L1-deviation}
  If Assumption~\autoref{assumption:subgaussian} holds 
  and $\Sigma$ satisfies \eqref{eq:pop-spectraldecomposition}, 
  then there is an absolute constant $c > 0$ such that
  \begin{align*}
    &
      \E \sup_{b \in \Sphere_2^{p-1} \cap \Ball_1^p(R_1)}
      | b^T(S - \Sigma) b |
    \\
    &\leq
      c \lambda_1 K^2
      \max \left\{
        R_1  \sqrt{ \frac{\log(p/R_1^2)}{n} }
      ,
        R_1^2 \frac{\log(p/R_1^2)}{n}
      \right\}
  \end{align*}
  for all $R_1^2 \in [1,p/e]$.
\end{lemma}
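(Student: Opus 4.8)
The plan is to bound the empirical process $\E \sup_{b \in \Sphere_2^{p-1} \cap \Ball_1^p(R_1)} |b^T(S - \Sigma) b|$ by a symmetrization/chaining argument over the $\ell_1$ ball, mimicking the approach of \textcite{Raskutti:2011} for linear regression. First I would reduce to a Gaussian (or Rademacher) complexity: writing $S - \Sigma = \frac1n\sum_i (Y_i Y_i^T - \E Y_i Y_i^T)$ with $Y_i = X_i - \mu$ (and absorbing the negligible $\bar X\bar X^T$ centering correction), a standard symmetrization inequality bounds the expected supremum by a constant times $\E \sup_{b} |\frac1n\sum_i g_i (b^T Y_i)^2|$ with $g_i$ Rademacher, and then a contraction-type argument lets us control the quadratic form. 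Because each $\langle Z_i, x\rangle$ is sub-Gaussian with $\psi_2$-norm at most $K$, the variables $b^T Y_i = \lambda_1^{1/2}\langle Z_i,\Sigma^{1/2}b/\lambda_1^{1/2}\rangle$ have $\psi_2$-norm $\le K\lambda_1^{1/2}\|b\|_2 = K\lambda_1^{1/2}$, so $(b^TY_i)^2$ has $\psi_1$-norm $\lesssim K^2\lambda_1$; the $\lambda_1$ and $K^2$ prefactors in the statement come out here.

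The key geometric step is to handle the index set $T_{R_1} := \Sphere_2^{p-1} \cap \Ball_1^p(R_1)$. The crucial observation (again borrowed from the regression literature) is that any $b$ in this set is within $\ell_1$-distance of a well-chosen sparse vector, so one splits $T_{R_1}$ into a "head" consisting of at most $s \asymp R_1^2$ coordinates and a "tail" whose $\ell_1$ mass is at most $R_1$. For the head piece one uses a union bound over the $\binom{p}{s}$ supports together with an $\varepsilon$-net of each $s$-dimensional sphere, producing an $\E\sup$ of order $\lambda_1 K^2 \max\{\sqrt{s\log(ep/s)/n}, s\log(ep/s)/n\}$ with $s \asymp R_1^2$, which is exactly the $R_1\sqrt{\log(p/R_1^2)/n}$ and $R_1^2\log(p/R_1^2)/n$ terms after plugging in $s$. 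For the tail piece one controls $\sup_b |\sum_{j,k} (S-\Sigma)_{jk} b_j b_k|$ over vectors of small $\ell_1$ norm by Hölder: it is at most $\|\vec(S-\Sigma)\|_\infty \cdot (\text{tail }\ell_1\text{ mass})^2$, and \Cref{lem:Linf-deviation} already gives $\E\|\vec(S-\Sigma)\|_\infty \lesssim K^2\lambda_1 \max\{\sqrt{\log p / n}, \log p/n\}$, which combines with the $R_1^2$ factor to give a term of the same or smaller order. Assembling head and tail and taking the max yields the claimed bound; the restriction $R_1^2 \in [1, p/e]$ makes $\log(p/R_1^2)$ positive and keeps $s = \lceil R_1^2\rceil \le p$, so the combinatorial factors behave.

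The step I expect to be the main obstacle is making the head/tail decomposition fully rigorous while getting the $\log(p/R_1^2)$ (rather than the cruder $\log p$) inside the square root. This requires choosing the sparsity level $s$ as a function of $R_1$ so that the variance term $s\log(ep/s)/n$ balances against the bias introduced by truncation, and then verifying that the truncation threshold $t$ (as in the $q\in(0,1)$ argument, $t \asymp \|\vec(S-\Sigma)\|_\infty/(\lambda_1-\lambda_2)$ or a deterministic surrogate) interacts correctly with the chaining bound; some care is needed because $S-\Sigma$ appears both in the process being bounded and (implicitly) in the optimal truncation level, so one should use a \emph{deterministic} truncation level optimized in hindsight and bound the two resulting pieces separately in expectation. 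A secondary technical point is upgrading the sub-Gaussian control of linear functionals $b^TY_i$ to a Bernstein-type tail for the \emph{quadratic} forms $(b^TY_i)^2 - \E(b^TY_i)^2$ uniformly over the net, which is where the $\max\{\sqrt{\cdot},\cdot\}$ structure (sub-exponential regime for small deviations, slower rate for large ones) originates; this is standard but must be tracked to land the second term in the $\max$ exactly as stated.
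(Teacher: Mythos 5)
Your head/tail decomposition is the natural analogue of what works for the \emph{linear} regression process over $\ell_1$ balls, but the quadratic structure of $b^T(S-\Sigma)b$ breaks the tail bound in a way your sketch does not address, and this is a real gap rather than a technicality. Concretely, once you write $b=b^{\mathrm{head}}+b^{\mathrm{tail}}$, H\"older applied to the pure tail term gives
\begin{equation*}
  \bigl|(b^{\mathrm{tail}})^T(S-\Sigma)\,b^{\mathrm{tail}}\bigr|
  \le \norm{\vec(S-\Sigma)}_\infty\,\norm{b^{\mathrm{tail}}}_1^2
  \le \norm{\vec(S-\Sigma)}_\infty\, R_1^2,
\end{equation*}
and by \Cref{lem:Linf-deviation} the expectation of the right-hand side is of order $\lambda_1 K^2 R_1^2 \sqrt{\log p/n}$, which exceeds the target $\lambda_1 K^2 R_1\sqrt{\log(p/R_1^2)/n}$ by a factor of roughly $R_1$. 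The same $R_1^2$ appears in the cross term. In the linear case the tail contributes only $\norm{b^{\mathrm{tail}}}_1\le R_1$, not its square, which is exactly why the Raskutti et al.\ argument closes there and does not here. You flagged that getting $\log(p/R_1^2)$ instead of $\log p$ would be the main obstacle, but the prefactor $R_1^2$ versus $R_1$ is the more serious one. A second, smaller issue: after symmetrization, the Ledoux--Talagrand contraction principle does not apply directly to $t\mapsto t^2$ without a boundedness/truncation step, so ``a contraction-type argument'' needs to be replaced by something tailored to quadratic processes.

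The paper sidesteps both problems with a different toolkit: it writes $b^T(S-\Sigma)b$ (plus the $\bar X\bar X^T$ correction, handled separately) as the quadratic empirical process $\frac1n\sum_i f^2(Z_i)-\E f^2(Z_i)$ for $f(\cdot)=\langle\cdot,\Sigma^{1/2}b\rangle$, invokes Mendelson's bound (\Cref{lem:mendelson}) which controls such processes by $\max\{d_{\psi_1}\gamma_2/\sqrt{n},\,\gamma_2^2/n\}$, shows the $\psi_2$ metric on $\mathcal F$ is $K\lambda_1^{1/2}$-Lipschitz in the Euclidean metric on $B=\Sphere_2^{p-1}\cap\Ball_1^p(R_1)$, passes to a Gaussian width via Talagrand's majorizing measure theorem, and then uses Gordon's sharp bound $\E\sup_{b\in\Ball_1^p(R_1)\cap\Ball_2^p(1)}\langle Y,b\rangle\le R_1\sqrt{2+\log(2p/R_1^2)}$. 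That last step is what delivers $R_1\sqrt{\log(p/R_1^2)}$ directly, with no head/tail split and no $R_1^2$ overshoot. If you want to salvage your decomposition approach, you would need to replace the $\ell_1$-H\"older tail bound with something like a Maurey/Carath\'eodory argument expressing $b$ as a short convex combination of $O(R_1^2)$-sparse unit vectors and then re-using the $\ell_0$ process bound (essentially re-deriving the Gaussian width of $\Ball_1\cap\Ball_2$), at which point you have arrived at the paper's argument by a longer route.
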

Assumption~\autoref{assumption:lq-radius} guarantees that $R_1^2 \in [1,p/e]$. 
Thus, we can apply Lemma~\autoref{lem:L1-deviation} and an argument similar to 
that used with \eqref{eq:trivial-bound} to complete the proof for the 
case $q=1$.

\subsubsection{Case 3: $q=0$}
We continue from \cref{eq:L0-branch}.  Since $\hat{\theta}_1$ and $\theta_1$
belong to $\Ball_0^p(R_0)$, their difference belongs to $\Ball_0^p(2 R_0)$.
Let $\Pi$ denote the diagonal matrix whose diagonal entries are 1 wherever
$\hat{\theta}_1$ or $\theta_1$ are nonzero, and 0 elsewhere. Then $\Pi$ has at
most $2 R_0$ nonzero diagonal entries, and $\Pi \hat{\theta}_1 =
\hat{\theta}_1$ and $\Pi \theta_1 = \theta_1$. So by the Von Neumann trace
inequality and \Cref{lem:canonical_angles},
\begin{align*}
  (\lambda_1 - \lambda_2) \epsilon^2 / 2
  &\leq
    | \langle S - \Sigma, \Pi(\hat{\theta}_1 \hat{\theta}_1^T - \theta_1 \theta_1^T)\Pi \rangle |
  \\
  &=
    | \langle \Pi(S - \Sigma)\Pi, \hat{\theta}_1 \hat{\theta}_1^T - \theta_1 \theta_1^T \rangle |
  \\
  &\leq
    \norm{\Pi(S - \Sigma)\Pi}_2
    \norm{ \hat{\theta}_1 \hat{\theta}_1^T - \theta_1 \theta_1^T }_{S_1}
  \\
  &=
    \norm{\Pi(S - \Sigma)\Pi}_2
    \sqrt{2} \epsilon
  \\
  &\leq
  \sup_{b \in \Sphere_2^{p-1} \cap \Ball_0^p(2 R_0)}
    |b^T (S - \Sigma) b|
    \sqrt{2} \epsilon
  \,,
\end{align*}
where $\norm{\,\cdot\,}_{S_1}$ denotes the sum of the singular values.
Divide both sides by $\epsilon$, rearrange terms, and then take the
expectation to get
\begin{equation*}
  \E \epsilon \leq
  \frac{c}{\lambda_1 - \lambda_2}
  \E \sup_{b \in \Sphere_2^{p-1} \cap \Ball_0^p(2 R_0)} |b^T (S - \Sigma) b|
  \,.
\end{equation*}

\begin{lemma}
  \label{lem:L0-Gaussian}
  If Assumption~\autoref{assumption:subgaussian} holds and $\Sigma$ satisfies
  \eqref{eq:pop-spectraldecomposition}, then there is an absolute constant $c > 0$ such that
  \begin{align*}
    &
      \E \sup_{b \in \Sphere_2^{p-1} \cap \Ball_0^p(d)} |b^T (S - \Sigma) b|
    \\
    &\leq c K^2 \lambda_1
      \max\left\{
        \sqrt{(d/n) \log(p/d)}
        ,
        (d/n) \log(p/d)
      \right\}
  \end{align*}
  for all integers $d \in [1,p/2)$.
\end{lemma}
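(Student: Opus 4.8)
The plan is to reduce the supremum over $\Sphere_2^{p-1}\cap\Ball_0^p(d)$ to a union over the $\binom{p}{d}$ coordinate subsets of size $d$, and on each subset control a Gaussian-type quadratic-form deviation by a standard $\epsilon$-net plus sub-Gaussian concentration argument. First I would write, for a fixed set $J\subseteq\{1,\dots,p\}$ with $|J|=d$, the random variable $W_J=\sup_{b\in\Sphere_2^{p-1},\,\supp(b)\subseteq J}|b^T(S-\Sigma)b|$. Using Assumption~\autoref{assumption:subgaussian}, each $\langle Z_i,x\rangle$ is sub-Gaussian with $\psi_2$-norm at most $K$; since $S-\Sigma=\tfrac1n\sum_i\bigl(\Sigma^{1/2}Z_iZ_i^T\Sigma^{1/2}-\Sigma\bigr)-\bar X\bar X^T+\dots$, the relevant quadratic form $b^T(S-\Sigma)b$ centered is an average of centered sub-exponential variables with $\psi_1$-norm of order $K^2\lambda_1$ (here the factor $\lambda_1$ enters because $b^T\Sigma b\le\lambda_1$ for $b\in\Sphere_2^{p-1}$). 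A Bernstein-type bound then gives, for each fixed $b$, a tail of the form $\exp\bigl(-cn\min\{t^2/(K^2\lambda_1)^2,\,t/(K^2\lambda_1)\}\bigr)$.

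Next I would discretize: the $d$-dimensional unit sphere admits an $\tfrac14$-net $\mathcal N_J$ with $\card{\mathcal N_J}\le 9^d$, and a standard comparison argument for quadratic forms shows $W_J\le 2\max_{b\in\mathcal N_J}|b^T(S-\Sigma)b|$ (the constant can be tightened but is immaterial). Taking a union bound over $\mathcal N_J$ and then over all $\binom{p}{d}\le (ep/d)^d$ choices of $J$, the total cardinality of the discretization is at most $\exp\bigl(d\log(9ep/d)\bigr)\le\exp\bigl(Cd\log(p/d)\bigr)$ for $d<p/2$. Equating this exponential with the Bernstein exponent forces the threshold $t$ to be of order $K^2\lambda_1\max\bigl\{\sqrt{(d/n)\log(p/d)},\,(d/n)\log(p/d)\bigr\}$, which is exactly the claimed rate; the two regimes in the $\max$ come from whether the sub-Gaussian or the sub-exponential part of Bernstein's inequality is active. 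Finally, integrating the resulting tail bound over $t$ (via $\E Y\le t_0+\int_{t_0}^\infty\prob(Y>t)\,dt$ with $t_0$ the claimed bound) converts the high-probability statement into the stated bound on $\E\,W$, where $W=\sup_{b\in\Sphere_2^{p-1}\cap\Ball_0^p(d)}|b^T(S-\Sigma)b|=\max_{|J|=d}W_J$.

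The main obstacle I anticipate is bookkeeping the sub-exponential tail cleanly, i.e. verifying that the centered quadratic forms $b^T(S-\Sigma)b$ genuinely have $\psi_1$-norm controlled by $K^2\lambda_1$ uniformly in $b\in\Sphere_2^{p-1}$, including the nuisance contribution of the sample-mean term $\bar X\bar X^T$; this is where Assumption~\autoref{assumption:subgaussian} (the factorization $X_i=\mu+\Sigma^{1/2}Z_i$ with $\sup_x\norm{\langle Z_i,x\rangle}_{\psi_2}\le K$) does the real work, since it lets one pull the $\Sigma^{1/2}$ out and reduce to controlling $\langle Z_i,\Sigma^{1/2}b\rangle$, which has $\psi_2$-norm at most $K\norm{\Sigma^{1/2}b}_2\le K\lambda_1^{1/2}$. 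A secondary technical point is keeping the net argument valid when $d/n$ is not small, so that the $(d/n)\log(p/d)$ regime is handled — but this is routine once the Bernstein bound is in place. Everything else is a standard peeling/union-bound computation of the kind used in the references, e.g.\ \textcite{Raskutti:2011}.
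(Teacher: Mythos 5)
Your argument is sound and gives the stated rate, but it takes a genuinely different (and more elementary) route than the paper's. The paper reuses the empirical-process machinery built for Lemma~\ref{lem:L1-deviation}: it invokes Lemma~\ref{lem:mendelson} (Mendelson's generic-chaining bound) to reduce the problem to estimating the Gaussian width $A=\E\sup_{b\in\Sphere_2^{p-1}\cap\Ball_0^p(d)}\langle Y,b\rangle$, and then bounds $A$ via essentially the same union-over-supports covering net that you propose, but applied to a Gaussian maximum so that only a sub-Gaussian union bound is needed. You instead work directly with the empirical process: union over the $\binom{p}{d}$ supports and a $1/4$-net of $\Sphere_2^{d-1}$, invoke Bernstein for the centered sub-exponential quadratic form at each fixed $b$, and integrate the resulting tail. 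Both routes yield $cK^2\lambda_1\max\{\sqrt{(d/n)\log(p/d)},\,(d/n)\log(p/d)\}$, with the two regimes arising from $\max\{A/\sqrt n,\,A^2/n\}$ in the paper and from Bernstein's sub-Gaussian/sub-exponential crossover in your version. What the paper's route buys is unification: the same chaining lemma handles $q=0$ and $q=1$, with only the Gaussian-width computation changing; yours is more self-contained but would have to redo the tail integration that Mendelson's theorem packages up. One point worth tightening if you flesh this out: the $\bar X\bar X^T$ part of $S-\Sigma$ is not a sum of i.i.d.\ centered sub-exponentials, so it is not covered by the Bernstein step as stated, and needs a separate (lower-order) bound, e.g.\ via $\|\langle\bar X,b\rangle\|_{\psi_2}^2\le cK^2\lambda_1/n$ as in the proof of Lemma~\ref{lem:L1-deviation}.
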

Taking $d = 2 R_0$ and applying an argument similar to 
that used with \eqref{eq:trivial-bound} completes the proof of the $q = 0$ case.
\qed

\section{Conclusion and Further Extensions}
\label{sec:conclusion}
We have presented upper and lower bounds on the minimax estimation error for
sparse PCA over $\ell_q$ balls. The bounds are sharp in $(p,n)$, and they show
that $\ell_q$ constraints on the leading eigenvector make estimation possible
in high-dimensions even when the number of variables greatly exceeds the
sample size. Although we have specialized to the case $k = 1$ (for the leading
eigenvector), our methods and arguments can be extended to the
multi-dimensional subspace case ($k > 1$). One nuance in that case is that
there are different ways to generalize the notion of $\ell_q$ sparsity to
multiple eigenvectors. A potential difficulty there is that if there is
multiplicity in the eigenvalues or if eigenvalues coalesce, then the
eigenvectors need not be unique (up to sign). So care must be taken to handle
this possibility.

\subsubsection*{Acknowledgements}
V.~Q.~Vu was supported by a NSF Mathematical Sciences Postdoctoral Fellowship
(DMS-0903120). J.~Lei was supported by NSF Grant BCS0941518. We thank the 
anonymous reviewers for their helpful comments.

\printbibliography

\clearpage
\appendix
\section{APPENDIX - SUPPLEMENTARY MATERIAL}

\subsection{Additional Technical Tools}
We state below two results that we use frequently in our proofs.
The first is well-known consequence of the CS decomposition. It relates the
canonical angles between subspaces to the singular values of products and
differences of their corresponding projection matrices.
\begin{lemma}[{\textcite[Theorem I.5.5]{StewartAndSun}}]
  \label{lem:canonical_angles}
  Let $\mathcal{X}$ and $\mathcal{Y}$ be $k$-dimensional subspaces of $\Real^p$ 
  with orthogonal projections $\Pi_\mathcal{X}$ and $\Pi_\mathcal{Y}$.
  Let $\sigma_1 \geq \sigma_2 \geq \cdots \geq \sigma_k$ be the sines of the
  canonical angles between $\mathcal{X}$ and $\mathcal{Y}$.
  Then
  \begin{enumerate}
    \item The singular values of $\Pi_\mathcal{X}( \Id{p} - \Pi_\mathcal{Y})$ are
    $$\sigma_1,\sigma_2,\ldots,\sigma_k, 0, \ldots, 0\,.$$
    \item The singular values of $\Pi_\mathcal{X} - \Pi_\mathcal{Y}$ are
    $$\sigma_1,\sigma_1, \sigma_2, \sigma_2,\ldots,\sigma_k, \sigma_k, 0, \ldots, 0\,.$$
  \end{enumerate}
\end{lemma}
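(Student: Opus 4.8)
The plan is to reduce both statements to the singular values of the $k \times k$ matrix $X^{T}Y$, where $X, Y \in \Real^{p\times k}$ are any matrices whose columns form orthonormal bases of $\mathcal{X}$ and $\mathcal{Y}$; by the definition of canonical angles, the singular values of $X^{T}Y$ are $\cos\theta_{1},\dots,\cos\theta_{k}$, where $\theta_{1},\dots,\theta_{k}$ are the canonical angles, so that $\sigma_{i}=\sin\theta_{i}$. Writing $X_{\perp}, Y_{\perp}\in\Real^{p\times(p-k)}$ for orthonormal bases of $\mathcal{X}^{\perp}$ and $\mathcal{Y}^{\perp}$, the matrices $[X\ X_{\perp}]$ and $[Y\ Y_{\perp}]$ are orthogonal, so $X^{T}YY^{T}X+X^{T}Y_{\perp}Y_{\perp}^{T}X=\Id{k}$ and $Y^{T}XX^{T}Y+Y^{T}X_{\perp}X_{\perp}^{T}Y=\Id{k}$. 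Hence, writing a singular value decomposition $X^{T}Y=UDV^{T}$ with $D=\mathrm{diag}(\cos\theta_{i})$, we get $(X^{T}Y_{\perp})(X^{T}Y_{\perp})^{T}=U(\Id{k}-D^{2})U^{T}$ and $(X_{\perp}^{T}Y)^{T}(X_{\perp}^{T}Y)=V(\Id{k}-D^{2})V^{T}$, so the nonzero singular values of both $X^{T}Y_{\perp}$ and $X_{\perp}^{T}Y$ are exactly the nonzero values among $\sin\theta_{1},\dots,\sin\theta_{k}$. This first paragraph carries all the trigonometry; the rest is bookkeeping.

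For part~1 I would use $\Id{p}-\Pi_{\mathcal{Y}}=Y_{\perp}Y_{\perp}^{T}$ to write $\Pi_{\mathcal{X}}(\Id{p}-\Pi_{\mathcal{Y}})=X(X^{T}Y_{\perp})Y_{\perp}^{T}$, and then form $\Pi_{\mathcal{X}}(\Id{p}-\Pi_{\mathcal{Y}})(\Id{p}-\Pi_{\mathcal{Y}})\Pi_{\mathcal{X}}=X(X^{T}Y_{\perp})(X^{T}Y_{\perp})^{T}X^{T}$. Since $X$ has orthonormal columns, the nonzero eigenvalues of this matrix coincide with those of $(X^{T}Y_{\perp})(X^{T}Y_{\perp})^{T}$, namely $\sin^{2}\theta_{1},\dots,\sin^{2}\theta_{k}$; and the rank of $\Pi_{\mathcal{X}}(\Id{p}-\Pi_{\mathcal{Y}})$ is at most $k$ because its range lies in $\mathcal{X}$. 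Taking square roots, the singular values are $\sigma_{1},\dots,\sigma_{k},0,\dots,0$.

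For part~2, set $P=\Pi_{\mathcal{X}}$, $Q=\Pi_{\mathcal{Y}}$ and $M=(P-Q)^{2}=P+Q-PQ-QP$; since $P-Q$ is symmetric, the singular values of $P-Q$ are the nonnegative square roots of the eigenvalues of $M$. A one-line computation gives $MP=P-PQP=PM$, so $M$ commutes with $P$ and therefore leaves both $\mathcal{X}$ and $\mathcal{X}^{\perp}$ invariant. On $\mathcal{X}$, $M$ acts as $v\mapsto v-PQv$, which in the basis $X$ has matrix $\Id{k}-(X^{T}Y)(X^{T}Y)^{T}$, with eigenvalues $\sin^{2}\theta_{1},\dots,\sin^{2}\theta_{k}$. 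On $\mathcal{X}^{\perp}$, $M$ acts as $w\mapsto(\Id{p}-P)Qw$, which in the basis $X_{\perp}$ has matrix $(X_{\perp}^{T}Y)(X_{\perp}^{T}Y)^{T}$, with eigenvalues $\sin^{2}\theta_{1},\dots,\sin^{2}\theta_{k}$ together with $p-2k$ zeros. Combining the eigenvalues of $M$ over $\Real^{p}=\mathcal{X}\oplus\mathcal{X}^{\perp}$ and taking square roots yields the claimed singular values $\sigma_{1},\sigma_{1},\dots,\sigma_{k},\sigma_{k},0,\dots,0$.

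The matrix manipulations are routine; the step I expect to require the most care is the bookkeeping in part~2 --- confirming that the sines produced by the action of $M$ on $\mathcal{X}$ and its action on $\mathcal{X}^{\perp}$ really are the \emph{same} numbers $\sigma_{i}$ (this is precisely what the two orthogonality identities of the first paragraph guarantee), counting multiplicities inside $\Real^{p}$, and handling the degenerate cases where some $\theta_{i}=0$ (i.e.\ $\mathcal{X}\cap\mathcal{Y}\neq\{0\}$) or where $2k>p$, so that the list of trailing zeros is empty. An alternative that sidesteps this bookkeeping is to invoke the CS decomposition of the orthogonal matrix $[Y\ Y_{\perp}]^{T}[X\ X_{\perp}]$, which simultaneously block-diagonalizes $P$ and $Q$ into $2\times2$ blocks (plus trivial $1\times1$ blocks) on which both assertions can be verified directly; this is the route of the cited reference.
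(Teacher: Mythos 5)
The paper does not prove this lemma; it cites it directly to Stewart and Sun (Theorem I.5.5), who derive it from the CS decomposition of the orthogonal matrix $[Y\ Y_\perp]^T[X\ X_\perp]$. So your argument is not a restatement of the paper's proof but a genuine, self-contained alternative, and it is correct. Your opening paragraph correctly uses the two orthogonality identities to pin down the singular values of $X^T Y_\perp$ and $X_\perp^T Y$ simultaneously via the shared SVD of $X^T Y$, which is the key point ensuring the sines that appear on $\mathcal{X}$ and on $\mathcal{X}^\perp$ in part 2 are the same numbers $\sigma_i$. Part 1 correctly computes $\Pi_\mathcal{X}(\Id{p}-\Pi_\mathcal{Y})(\Id{p}-\Pi_\mathcal{Y})\Pi_\mathcal{X} = X(X^T Y_\perp)(X^T Y_\perp)^T X^T$ and pairs the resulting $k$ eigenvalues with $p-k$ zeros from the range constraint. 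Part 2 is the nicest step: the observation that $M=(P-Q)^2$ commutes with $P$ (via $MP = PM = P - PQP$), so that $M$ block-diagonalizes over $\mathcal{X}\oplus\mathcal{X}^\perp$, is a tidy way to avoid writing out the CS decomposition explicitly. The trade-off is that your route requires a small amount of care with multiplicities and the degenerate regime $2k>p$ (where $X_\perp^T Y$ has rank at most $p-k<k$ and hence at least $2k-p$ of the $\sigma_i$ are forced to vanish); you flag this correctly. The CS decomposition absorbs that bookkeeping automatically by putting $P$ and $Q$ into simultaneous $2\times 2$ (plus $1\times 1$) block form, which is why it is the standard textbook route; your argument is a bit more hands-on but more elementary in that it requires nothing beyond the SVD of the $k\times k$ matrix $X^T Y$ and the commutation trick.
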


\begin{lemma}
  \label{lem:L2-equivalence}
  Let $x,y \in \Sphere_2^{p-1}$.  Then
  \begin{equation*}
    \norm{xx^T - yy^T}_F^2 \leq 2 \norm{x - y}_2^2
  \end{equation*}
  If in addition $\norm{x - y}_2 \leq \sqrt{2}$, then 
  \begin{equation*}
    \norm{xx^T - yy^T}_F^2 \geq \norm{x - y}_2^2
  \end{equation*}
\end{lemma}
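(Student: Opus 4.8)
The plan is to reduce everything to a single scalar, namely the inner product $a = \langle x, y \rangle$, which lies in $[-1,1]$ since $x, y \in \Sphere_2^{p-1}$. First I would expand the Euclidean distance: $\norm{x - y}_2^2 = \norm{x}_2^2 + \norm{y}_2^2 - 2\langle x, y\rangle = 2(1 - a)$. This immediately shows that the extra hypothesis $\norm{x - y}_2 \leq \sqrt{2}$ is equivalent to $a \geq 0$, which will be exactly what the lower bound needs.

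Next I would compute the Frobenius quantity by expanding $\norm{xx^T - yy^T}_F^2 = \langle xx^T, xx^T\rangle - 2\langle xx^T, yy^T\rangle + \langle yy^T, yy^T\rangle$ and evaluating each term with the trace identity $\langle uu^T, vv^T\rangle = \tr(uu^T vv^T) = \langle u, v\rangle^2$. Since $x$ and $y$ are unit vectors, the diagonal terms equal $1$ and the cross term equals $a^2$, giving $\norm{xx^T - yy^T}_F^2 = 2(1 - a^2) = 2(1-a)(1+a)$. Comparing with the first computation yields the clean identity
\begin{equation*}
  \norm{xx^T - yy^T}_F^2 = (1 + a)\,\norm{x - y}_2^2 \,.
\end{equation*}

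From this identity both claims are one line. For the upper bound, $a \leq 1$ forces $1 + a \leq 2$. For the lower bound, the hypothesis $\norm{x - y}_2 \leq \sqrt{2}$ gives $a \geq 0$, hence $1 + a \geq 1$. I do not anticipate a genuine obstacle here; the only place requiring a small amount of care is verifying the trace identity $\langle uu^T, vv^T\rangle = \langle u, v\rangle^2$ (via $\tr(u(u^Tv)v^T) = (u^Tv)(v^Tu)$) and making sure the unit-norm normalization is used correctly so that the diagonal terms are exactly $1$. Everything else is elementary algebra in the variable $a$.
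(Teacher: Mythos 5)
Your proof is correct and lands on the same governing identity, $\norm{xx^T - yy^T}_F^2 = 2\bigl(1 - (x^Ty)^2\bigr)$, that the paper uses, but by a more elementary route. The paper invokes \Cref{lem:canonical_angles} (the CS-decomposition/canonical-angles result) to read off $\tfrac{1}{2}\norm{xx^T - yy^T}_F^2 = 1 - (x^Ty)^2$ from the singular values of the difference of projectors, then substitutes $x^Ty = \bigl(2 - \norm{x-y}_2^2\bigr)/2$ and factors as $\norm{x-y}_2^2\bigl(1 - \norm{x-y}_2^2/4\bigr)$. You instead expand the Frobenius inner product directly via $\langle uu^T, vv^T\rangle = \langle u,v\rangle^2$, which avoids the projector/angle machinery entirely and makes the lemma self-contained. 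The paper's choice is natural only because \Cref{lem:canonical_angles} is already loaded for other purposes (the $q=0$ upper bound and the KL-divergence computation), so reusing it is free; your direct expansion is preferable if the lemma is to stand on its own. Your reformulation of the side condition $\norm{x-y}_2 \leq \sqrt{2}$ as $a = x^Ty \geq 0$, giving $1 + a \geq 1$ in the factored identity $\norm{xx^T-yy^T}_F^2 = (1+a)\norm{x-y}_2^2$, is also a slightly cleaner way to close the lower bound than the paper's equivalent observation that $1 - \norm{x-y}_2^2/4 \geq 1/2$.
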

\begin{proof}
  By \Cref{lem:canonical_angles} and the polarization identity
  \begin{align*}
    \frac{1}{2} \norm{xx^T - yy^T}_F^2 
    &= 1 - (x^T y)^2 \\
    &= 1 - \left(\frac{2 - \norm{x-y}^2}{2}\right)^2 \\
    &= \norm{x-y}_2^2 - \norm{x-y}_2^4/4 \\
    &= \norm{x-y}_2^2(1 - \norm{x-y}_2^2/4)
    \,.
  \end{align*}
  The upper bound follows immediately.  Now if $\norm{x - y}_2^2 \leq 2$, then
  the above right-hand side is bounded from below by $\norm{x - y}_2^2/2$.
\end{proof}

\subsection{Proofs for \Cref{thm:lower-bound}}
\paragraph{Proof of \Cref{lem:special-set}}
Our construction is based on a hypercube argument.
We require a variation of the Varshamov-Gilbert bound due to
\textcite{Birge:1998}. We use a specialization of the version that appears 
in \cite[Lemma 4.10]{Massart:2007}.
\begin{lemma*}
  \label{lem:massart}
  Let $d$ be an integer satisfying $1 \leq d \leq (p-1)/4$.  There exists a subset 
  $\Omega_d \subset \{0,1\}^{p-1}$ that satisfies the following
  properties:
  \begin{enumerate}
    \item $\norm{\omega}_0 = d$ for all $\omega \in \Omega_d$,
    \item $\norm{\omega - \omega'}_0 > d/2$ for all distinct pairs $\omega, \omega' \in \Omega_d$, and
    \item $\log\card{\Omega_d} \geq c d \log((p-1)/d)$, where $c \geq 0.233$.
  \end{enumerate}
\end{lemma*}
Let $d \in [1, (p-1)/4]$ be an integer, $\Omega_d$ be the corresponding
subset of $\{0,1\}^{p-1}$ given by preceding lemma,
\begin{equation*}
  x(\omega) 
  = \big( (1- \epsilon^2)^{\frac{1}{2}}, \epsilon \omega d^{-\frac{1}{2}} \big)
  \in \Real^p
  \,,
\end{equation*}
and
\begin{equation*}
  \Theta = \{ x(\omega) : \omega \in \Omega_d \}\,.    
\end{equation*}
Clearly, $\Theta$ satisfies the following properties:
\begin{enumerate}
  \item \label{it:special-set-sphere}
  $\Theta \subseteq \Sphere_2^{p-1}$,
  \item \label{it:special-set-separation}
  $\epsilon / \sqrt{2} < \norm{\theta_1 - \theta_2}_2 \leq \sqrt{2} \epsilon$ for all distinct pairs $\theta_1,\theta_2 \in \Theta_d$, 
  \item \label{it:special-set-lq-norm}
  $\norm{\theta}_q^q \leq 1 + \epsilon^q d^{(2-q)/2}$ for all $\theta \in \Theta$, and
  \item \label{it:special-set-cardinality}
  $\log\card{\Theta} \geq c d [ \log(p-1) - \log d ]$, where $c \geq 0.233$.
\end{enumerate}
To ensure that $\Theta$ is also contained in $\Ball_q^p(R_q)$, we will choose $d$ so that 
the right side of the upper bound in \cref{it:special-set-lq-norm} is smaller than $R_q$.
Choose 
\begin{equation*}
  d 
  = 
  \left\lfloor 
    \min\Big\{
      (p-1)/4 ,
      \left( \bar{R}_q / \epsilon^q \right)^{\frac{2}{2-q}} 
    \Big\}
  \right\rfloor
  \,.
\end{equation*}
The assumptions that $p \geq 5$, $\epsilon \leq 1$, and $\bar{R}_q
\geq 1$ guarantee that this is a valid choice satisfying $d \in [1, (p-1)/4]$.
The choice also guarantees that $\Theta \subset \Ball_q^p(R_q)$, because
\begin{align*}
  \norm{\theta}_q^q 
    &\leq 
    1 + \epsilon^q d^{(2-q)/2} \\
    &\leq 
    1 + \epsilon^q \left( \bar{R}_q / \epsilon^q \right) 
    = 
    R_q
\end{align*}
for all $\theta \in \Theta$. 
To complete the proof we will show that $\log\card{\Theta}$ satisfies the lower bound
claimed by the lemma. Note that the function $a \mapsto a \log[(p-1)/a]$ is
increasing on $[0, (p-1)/e]$ and decreasing on $[(p-1)/e, \infty)$. So if
\begin{equation*}
  a :=
  \left( \frac{\bar{R}_q}{\epsilon^q} \right)^{\frac{2}{2-q}}
  \leq \frac{p-1}{4}
  \,,
\end{equation*}
then
\begin{align*}
    \log\card{\Theta} 
  &\geq
    c d \left[ \log (p-1) - \log d \right]
  \\
  &\geq 
    (c/2) a \left[ \log (p-1) - \log a \right]
  \,,
\end{align*}
because $d = \lfloor a \rfloor \geq a/2$.
Moreover, since $d \leq (p-1)/4$ and the above right hand side is maximized when $a
= (p-1)/e$, the inequality remains valid for all $a \geq 0$ if we replace the
constant $(c/2)$ with the constant
\begin{align*}
  c'
  &= (c/2) \frac{ \frac{p-1}{4}[\log(p-1) - \log \frac{p-1}{4}] }
           { \frac{p-1}{e}[\log(p-1) - \log \frac{p-1}{e}] }
  \\
  &= (c/2) \frac{e \log 4}{4} \geq 0.109
  \,.
\end{align*}
\qed
\paragraph{Proof of \Cref{lem:kldivergence}}
Let $A_i = x_i x_i^T$ for $i =1,2$.  
Then $\Sigma_i = \lambda_1 A_i + \lambda_2 ( \Id{p} - A_i )$.
Since $\Sigma_1$ and $\Sigma_2$ have the same eigenvalues 
and hence the same determinant,
\begin{align*}
  \kldivergence{\prob_1}{\prob_2}
  &= 
    \frac{n}{2} \left[
      \tr(\Sigma_2^{-1} \Sigma_1) - p
      - \log\det(\Sigma_2^{-1} \Sigma_1)
    \right]
  \\
  &=
    \frac{n}{2} \left[
      \tr(\Sigma_2^{-1} \Sigma_1) - p
    \right]
  \\
  &=
    \frac{n}{2} \tr(\Sigma_2^{-1} (\Sigma_1 - \Sigma_2) )
    \,.
\end{align*}
The spectral decomposition 
$\Sigma_2 = \lambda_1 A_2 + \lambda_2 (\Id{p} - A_2)$
allows us to easily calculate that 
\begin{equation*}
  \Sigma_2^{-1}
  = 
  \lambda_2^{-1} (\Id{p} - A_2) + \lambda_1^{-1} A_2
  \,.
\end{equation*}
Since orthogonal projections are idempotent, i.e. $A_i A_i = A_i$,
\begin{align*}
  &
    \Sigma_2^{-1} (\Sigma_1 - \Sigma_2)
  \\
  &=
    \frac{\lambda_1 - \lambda_2}{\lambda_1}
    [ (\lambda_1 / \lambda_2) (\Id{p} - A_2) + A_2 ]
    (A_1 - A_2)
  \\
  &=
    \frac{\lambda_1 - \lambda_2}{\lambda_1}
    [ (\lambda_1 / \lambda_2) (\Id{p} - A_2) A_1 -  A_2 (A_2 - A_1) ]
  \\
  &=
    \frac{\lambda_1 - \lambda_2}{\lambda_1}
    [ (\lambda_1 / \lambda_2) (\Id{p} - A_2) A_1 -  A_2 (\Id{p} - A_1) ]
  \,.
\end{align*}
Using again the idempotent property and symmetry of projection matrices,
\begin{align*}
  &
    \tr((\Id{p} - A_2) A_1 )
  \\
  &=
    \tr( (\Id{p} - A_2) (\Id{p} - A_2) A_1 A_1)
  \\
  &=
    \tr(A_1 (\Id{p} - A_2) (\Id{p} - A_2) A_1)
  \\
  &=
    \norm{A_1 (\Id{p} - A_2)}_F^2
\end{align*}
and similarly,
\begin{equation*}
  \tr(A_2 (\Id{p} - A_1)) = \norm{A_2 (\Id{p} - A_1)}_F^2
  \,.
\end{equation*}
By \Cref{lem:canonical_angles}, 
\begin{equation*}
  \norm{A_1 (\Id{p} - A_2)}_F^2
  = \norm{A_2 (\Id{p} - A_1)}_F^2
  = \frac{1}{2} \norm{A_1 - A_2}_F^2
  \,.
\end{equation*}
Thus,
\begin{align*}
    \tr( \Sigma_2^{-1} (\Sigma_1 - \Sigma_2) )
  &=
    \frac{(\lambda_1 - \lambda_2)^2}{2 \lambda_1 \lambda_2}
    \norm{A_1 - A_2}_F^2
\end{align*}
and the result follows.
\qed

\subsection{Proofs for \Cref{thm:upper-bound}}
\paragraph{Proof of \Cref{lem:curvature}}
We begin with the expansion,
\begin{align*}
  &\langle \Sigma, \theta_1\theta_1^T - \theta\theta^T \rangle
  \\
  &=
  \tr\{\Sigma \theta_1 \theta_1^T\} - \tr\{\Sigma \theta \theta^T\} \\
  &=
  \tr\{\Sigma(\Id{p} - \theta\theta^T) \theta_1 \theta_1^T\}
  - \tr\{\Sigma \theta \theta^T (\Id{p} - \theta_1\theta_1^T) \}
  \,.
\end{align*}
Since $\theta_1$ is an eigenvector of $\Sigma$ corresponding to
the eigenvalue $\lambda_1$,
\begin{align*}
  &\tr\{\Sigma(\Id{p} - \theta\theta^T) \theta_1 \theta_1^T\} \\
  &=
  \tr\{\theta_1 \theta_1^T \Sigma(\Id{p} - \theta\theta^T) \theta_1 \theta_1^T\} \\
  &=
  \lambda_1 \tr\{\theta_1 \theta_1^T (\Id{p} - \theta\theta^T) \theta_1 \theta_1^T\} \\
  &=
  \lambda_1 \tr\{\theta_1 \theta_1^T (\Id{p} - \theta\theta^T)^2 \theta_1 \theta_1^T\} \\
  &=
  \lambda_1 \norm{\theta_1 \theta_1^T (\Id{p} - \theta\theta^T)}_F^2
  \,.
\end{align*}
Similarly, we have
\begin{align*}
  &
    \tr\{\Sigma \theta \theta^T (\Id{p} - \theta_1\theta_1^T) \} \\
  &=
    \tr\{(\Id{p} - \theta_1\theta_1^T) \Sigma \theta \theta^T (\Id{p} - \theta_1\theta_1^T) \} \\
  &=
    \tr\{\theta^T (\Id{p} - \theta_1\theta_1^T) \Sigma (\Id{p} - \theta_1\theta_1^T) \theta \} \\
  &\leq
    \lambda_2
    \tr\{\theta^T (\Id{p} - \theta_1\theta_1^T)^2 \theta \} \\
  &=
    \lambda_2
    \norm{\theta \theta^T(\Id{p} - \theta_1\theta_1^T)}_F^2
  \,.
\end{align*}
Thus,
\begin{align*}
    \langle \Sigma, \theta_1\theta_1^T - \theta\theta^T \rangle
  &\geq
    (\lambda_1 - \lambda_2) \norm{\theta \theta^T(\Id{p} - \theta_1\theta_1^T)}_F^2
    \\
  &= 
    \frac{1}{2} (\lambda_1 - \lambda_2) \norm{\theta \theta^T - \theta_1\theta_1^T}_F^2
  \,.
\end{align*}
The last inequality follows from \Cref{lem:canonical_angles}.
\qed

\paragraph{Proof of \Cref{lem:Linf-deviation}}
Since the distribution of $S - \Sigma$ does not depend on $\mu = \E X_i$,
we assume without loss of generality that $\mu = 0$.
Let $a,b \in \{1,\ldots,p\}$ and
\begin{align*}
  D_{ab}
  &=
    \frac{1}{n} \sum_{i=1}^n (X_m)_a (X_m)_b  - \Sigma_{ab}
  \\
  &=:
    \frac{1}{n} \sum_{i=1}^n \zeta_i - \E \zeta_i
    \,.
\end{align*}
Then
\begin{equation*}
  (S - \Sigma)_{ab} = D_{ab} - \bar{X}_a \bar{X}_b
  \,.
\end{equation*}
Using the elementary inequality $2 |ab| \leq a^2 + b^2$, we have
by Assumption~\autoref{assumption:subgaussian} that
\begin{align*}
    \norm{\zeta_i}_{\psi_1}
  &=
    \norm{\langle X_i, 1_a\rangle \langle X_i, 1_b \rangle }_{\psi_1}
  \\
  &\leq
    \max_a \norm{|\langle X_i, 1_a\rangle|^2}_{\psi_1}
  \\
  &\leq
    2 \max_a \norm{\langle \Sigma^{1/2} Z_i, 1_a\rangle}_{\psi_2}^2
  \\
  &\leq
    2 \lambda_1 K^2
  \,.
\end{align*}
In the third line, we used the fact that the $\psi_1$-norm is bounded above by
a constant times the $\psi_2$-norm \cite[see][p. 95]{vanderVaartAndWellner}.
By a generalization of Bernstein's Inequality for the $\psi_1$-norm
\cite[see][Section 2.2]{vanderVaartAndWellner}\,,
for all $t > 0$
\begin{align*}
    \prob(|D_{ab}| > 8 t \lambda_1 K^2)
  &\leq
    \prob(|(D_{ab}| > 4 t \norm{\zeta_i}_{\psi_1})
  \\
  &\leq
  2 \exp( -n \min\{t, t^2\} / 2 )
  \,.
\end{align*}
This implies \cite[Lemma 2.2.10]{vanderVaartAndWellner} the bound
\begin{equation}
  \begin{aligned}
    \label{eq:d-bound}
    &
      \bignorm{ \max_{ab} |D_{ab}| }_{\psi_1}
    \\
    &\leq
      c K^2 \lambda_1
      \max\left\{ \sqrt{\frac{\log p}{n}}, \frac{\log p}{n} \right\}
    \,.
  \end{aligned}
\end{equation}
Similarly,
\begin{align*}
  2 \norm{\bar{X}_a \bar{X}_b}_{\psi_1}
  &\leq
    \norm{|\langle \bar{X}, 1_a\rangle|^2}_{\psi_1} + \norm{|\langle \bar{X}, 1_b\rangle|^2}_{\psi_1}
  \\
  &\leq
    \norm{\langle \bar{X}, 1_a\rangle}_{\psi_2}^2 + \norm{\langle \bar{X}, 1_b\rangle}_{\psi_2}^2
  \\
  &\leq
    \frac{2}{n^2} \sum_{i=1}^n \norm{\langle X_i, 1_a\rangle}_{\psi_2}^2 + \norm{\langle X_i, 1_b\rangle}_{\psi_2}^2
  \\
  &\leq \frac{4}{n} \lambda_1 K^2
  \,.
\end{align*}
So by a union bound \cite[Lemma 2.2.2]{vanderVaartAndWellner},
\begin{equation}
  \label{eq:xbar-bound}
  \bignorm{\max_{ab} |\bar{X}_a \bar{X}_b|}_{\psi_1}
  \leq
  c K^2 \lambda_1 \frac{\log p}{n}
  \,.
\end{equation}
Adding \cref{eq:d-bound,eq:xbar-bound} and then adjusting the constant $c$
gives the desired result, because
\begin{align*}
  &
    \bignorm{\norm{\vec(S-\Sigma)}_\infty}_{\psi_1}
  \\
  &\leq
    \bignorm{\max_{ab} |D_{ab}|}_{\psi_1}
    + \bignorm{\max_{ab} | \bar{X}_a \bar{X}_b |}_{\psi_1}
  \,.
\end{align*}
\qed

\paragraph{Proof of \Cref{lem:L1-deviation}}
Let $B = \Sphere_2^{p-1} \cap \Ball_1^p(R_1)$.
We will use a recent result in empirical process theory due to \textcite{Mendelson:2010}
to bound
\begin{equation*}
  \sup_{b \in B}
  b^T(S - \Sigma) b
\,.
\end{equation*}
The result uses \citeauthor{Talagrand}'s generic chaining method,
and allows us to reduce the problem to bounding the
supremum of a Gaussian process. The statement of the result involves
the generic chaining complexity, $\gamma_2(B,d)$, of a set $B$ equipped with the metric $d$.
We only use a special case, $\gamma_2(B, \norm{\,\cdot\,}_2)$,
where the complexity measure is equivalent to the expectation of the supremum
of a Gaussian process on $B$. We refer the reader to \cite{Talagrand} for
a complete introduction.
\begin{lemma}[{\textcite{Mendelson:2010}}]
  \label{lem:mendelson}
  Let $Z_i$, $i=1,\ldots,n$ be i.i.d. random variables.
  There exists an absolute constant $c$ for which the following holds.
  If $\mathcal{F}$ is a symmetric class of mean-zero functions then
  \begin{align*}
    &
      \E \sup_{f \in \mathcal{F}}
      \left|
      \frac{1}{n} \sum_{i=1}^n f^2(Z_i) - \E f^2(Z_i)
      \right|
    \\
    &\leq
      c \max\left\{
        d_{\psi_1} \frac{\gamma_2(\mathcal{F}, \psi_2)}{\sqrt{n}}
        \,,
        \frac{\gamma_2^2(\mathcal{F}, \psi_2)}{n}
        \right\}
    \,,
  \end{align*}
  where $d_{\psi_1} = \sup_{f \in \mathcal{F}} \norm{f}_{\psi_1}$.
\end{lemma}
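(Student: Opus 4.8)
The plan is to follow the generic chaining argument of \textcite{Mendelson:2010}. The first step is symmetrization: by the standard symmetrization inequality it suffices to bound $\E\sup_{f\in\mathcal F}\bigl|\frac1n\sum_{i=1}^n\varepsilon_i f^2(Z_i)\bigr|$, where $\varepsilon_1,\dots,\varepsilon_n$ are independent Rademacher signs, independent of $Z_1,\dots,Z_n$. Conditionally on the sample, the process $f\mapsto\frac1{\sqrt n}\sum_i\varepsilon_i f^2(Z_i)$ has sub-Gaussian increments with respect to the random empirical metric $\rho(f,g)=\bigl(\frac1n\sum_i(f^2(Z_i)-g^2(Z_i))^2\bigr)^{1/2}$, so Talagrand's majorizing measure theorem \cite{Talagrand} gives, for any fixed $f_0\in\mathcal F$,
\[
  \E_\varepsilon\sup_{f\in\mathcal F}\Bigl|\tfrac1n\textstyle\sum_i\varepsilon_i f^2(Z_i)\Bigr|
  \le
  \frac{c}{\sqrt n}\,\gamma_2(\mathcal F,\rho)
  +
  \frac1{\sqrt n}\Bigl(\tfrac1n\textstyle\sum_i f_0^4(Z_i)\Bigr)^{1/2}
  \,.
\]
After taking expectation over the sample, the base-point term is negligible: since $f_0\in\mathcal F$ satisfies $\norm{f_0}_{\psi_1}\le d_{\psi_1}$ we have $\E f_0^4\le c\,d_{\psi_1}^4$, so by Jensen's inequality this term is at most $c\,d_{\psi_1}^2/\sqrt n$; and because $\mathcal F$ is symmetric, $\gamma_2(\mathcal F,\psi_2)$ is, up to an absolute constant, at least the $\psi_2$-diameter of $\mathcal F$ and hence at least $d_{\psi_1}$, so the base-point term is at most $c\,d_{\psi_1}\gamma_2(\mathcal F,\psi_2)/\sqrt n$.

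It then remains to bound $\gamma_2(\mathcal F,\rho)$ in terms of $\gamma_2(\mathcal F,\psi_2)$ and $d_{\psi_1}$. For this I would use the factorization $f^2-g^2=(f-g)(f+g)$. Fix an admissible sequence $(\pi_s f)_{s\ge0}$ for $\mathcal F$ realizing $\gamma_2(\mathcal F,\psi_2)$ and telescope $f^2$ along this chain; the $s$-th increment of $f^2$ is $(\pi_s f-\pi_{s-1}f)(\pi_s f+\pi_{s-1}f)$, so
\[
  \rho(\pi_s f,\pi_{s-1}f)
  \le
  \Bigl(\max_i\,|(\pi_s f+\pi_{s-1}f)(Z_i)|\Bigr)
  \Bigl(\tfrac1n\textstyle\sum_i(\pi_s f-\pi_{s-1}f)^2(Z_i)\Bigr)^{1/2}
  \,.
\]
Writing $\pi_s f+\pi_{s-1}f$ as $2f_0$ plus a sum of chain increments splits the resulting series into two parts. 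In the first part the multiplicative factor becomes the sample supremum norm of a single member of $\mathcal F$, which one controls by $d_{\psi_1}$; combined with the fact that the empirical $L_2$ norms of the chain increments are comparable (uniformly along the chain) to their $\psi_2$ norms, this produces a contribution of order $d_{\psi_1}\gamma_2(\mathcal F,\psi_2)$, which after the $1/\sqrt n$ prefactor gives the first term of the claimed bound. In the second part the multiplicative factor is itself a sum of chain increments, so the product of two chaining series appears, yielding a contribution of order $\gamma_2(\mathcal F,\psi_2)^2$ and, after the prefactor, the second term $\gamma_2(\mathcal F,\psi_2)^2/n$.

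The main obstacle — the technical heart of \textcite{Mendelson:2010} — is carrying out this last step \emph{without losing logarithmic factors in $n$}. The crude maximal inequality $\E\max_{i\le n}|h(Z_i)|\le c\,\norm{h}_{\psi_1}\log n$ is too lossy, so controlling the factors $\max_i|(\pi_s f+\pi_{s-1}f)(Z_i)|$ requires a finer argument: for each $f$ one splits its values $f^2(Z_1),\dots,f^2(Z_n)$ into the few largest coordinates and the bulk, bounds the contribution of the largest coordinates by a structural estimate on the supremum over $\mathcal F$ of the sum of the $k$ largest of the values $f^2(Z_i)$ in terms of $\gamma_2(\mathcal F,\psi_2)$, bounds the bulk by the truncation level, and optimizes over $k$. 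One also needs the self-referential claim used above — that the empirical $L_2$ norms along the chain are comparable to the $\psi_2$ norms — which follows from the bound being proved via a standard peeling (slicing) argument over dyadic scales. Assembling these estimates and undoing the symmetrization yields $\E\sup_{f\in\mathcal F}\bigl|\frac1n\sum_i f^2(Z_i)-\E f^2(Z_i)\bigr|\le c\max\{d_{\psi_1}\gamma_2(\mathcal F,\psi_2)/\sqrt n,\ \gamma_2(\mathcal F,\psi_2)^2/n\}$.
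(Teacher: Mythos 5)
This lemma is not proved in the paper at all: it is imported verbatim from \textcite{Mendelson:2010}, so there is no internal argument to compare your attempt against. Judged on its own terms, your sketch does follow the route of the cited proof --- symmetrization, chaining for the quadratic process via the factorization $f^2-g^2=(f-g)(f+g)$, control of the offending factors $\max_i|(\pi_s f+\pi_{s-1}f)(Z_i)|$ through the sum of the $k$ largest coordinates, and a comparison of empirical $L_2$ norms with $\psi_2$ norms along the chain. The bookkeeping you do carry out (the base-point term, $\E f_0^4\le c\,d_{\psi_1}^4$, and $d_{\psi_1}\le c\,\gamma_2(\mathcal F,\psi_2)$ via symmetry of $\mathcal F$) is correct.

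The genuine gap is that everything you call ``the technical heart'' is precisely the content of the theorem, and you only describe it rather than prove it. The whole point of \textcite{Mendelson:2010} is to avoid the $\log n$ loss of the naive maximal inequality, and this is achieved by an independent, a priori estimate on the functional $\sup_{f\in\mathcal F}\bigl(\sum_{i\le k}(f^{*}(Z_i))^{2}\bigr)^{1/2}$ (the $k$ largest values of $|f(Z_i)|$) in terms of $\gamma_2(\mathcal F,\psi_2)$ and $d_{\psi_1}$, proved by a separate chaining argument with level-dependent truncation; without that estimate, neither your truncation-and-optimize-over-$k$ step nor the claimed uniform comparability of empirical $L_2$ norms to $\psi_2$ norms along the chain can be justified. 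As you state it, the latter claim (``follows from the bound being proved via a standard peeling argument'') is circular: you are invoking the conclusion of the lemma to control a quantity needed in its proof, and breaking that circularity is exactly what Mendelson's auxiliary bound accomplishes. So the proposal is a faithful roadmap of the cited proof, but it does not constitute a proof of the lemma; for the purposes of this paper the correct disposition is what the authors do, namely cite the result.
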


Since the distribution of $S - \Sigma$ does not depend on
$\mu = \E X_i$, we assume without loss of generality that $\mu = 0$.
Then $| b^T(S - \Sigma) b |$ is bounded from above
by a sum of two terms,
\begin{equation*}
  \left| b^T \left(\frac{1}{n} \sum_{i=1}^n X_i X_i^T - \Sigma\right) b \right|
  + b^T \bar{X} \bar{X}^T b
  \,,
\end{equation*}
which can be rewritten as
\begin{equation*}
  D_1(b)
  :=
  \left|
  \frac{1}{n} \sum_{i=1}^n
  \langle Z_i, \Sigma^{1/2} b \rangle^2 - \E \langle Z_i, \Sigma^{1/2} b \rangle^2
  \right|
\end{equation*}
and $D_2(b) := \langle \bar{Z}, \Sigma^{1/2} b \rangle^2$,
respectively.  To apply \Cref{lem:mendelson} to $D_1$,
define the class of linear functionals
\begin{equation*}
  \mathcal{F}
  := \{
    \langle \cdot, \Sigma^{1/2} b\rangle
    :
      b \in B
  \}
  \,.
\end{equation*}
Then
\begin{equation*}
  \sup_{b \in B}
  D_1(b) =
  \sup_{f \in \mathcal{F}}
  \left|
  \frac{1}{n} \sum_{i=1}^n f^2(Z_i) - \E f^2(Z_i)
  \right|
  \,,
\end{equation*}
and we are in the setting of Lemma \ref{lem:mendelson}.

First, we bound the $\psi_1$-diameter of $\mathcal{F}$.
\begin{align*}
  d_{\psi_1}
  &=
    \sup_{ b \in B }
    \norm{\langle Z_i, \Sigma^{1/2} b \rangle}_{\psi_1}
  \\
  &\leq c
  \sup_{ b \in B }
    \norm{\langle Z_i, \Sigma^{1/2} b \rangle}_{\psi_2}
  \,.
\end{align*}
By Assumption~\autoref{assumption:subgaussian},
\begin{equation*}
  \norm{\langle Z_i, \Sigma^{1/2} b \rangle}_{\psi_2}
  \leq K \norm{\Sigma^{1/2} b}_2
  \leq K \lambda_1^{1/2}
\end{equation*}
and so
\begin{equation}
  \label{eq:psi1-diameter}
    d_{\psi_1}
  \leq
    c K \lambda_1^{1/2}
  \,.
\end{equation}
Next, we bound $\gamma_2(\mathcal{F}, \psi_2)$ by showing that the
metric induced by the $\psi_2$-norm on $\mathcal{F}$ is equivalent
to the Euclidean metric on $B$.
This will allow us to reduce the problem to bounding the supremum of a
Gaussian process.
For any $f,g \in \mathcal{F}$,
by Assumption~\autoref{assumption:subgaussian},
\begin{align}
  \notag
  \norm{(f - g)(Z_i)}_{\psi_2}
  &=
    \norm{\langle Z_i, \Sigma^{1/2} (b_f - b_g) \rangle }_{\psi_2}
  \\
  \notag
  &\leq
    K \norm{\Sigma^{1/2} (b_f - b_g)}_2
  \\
  \label{eq:psi2-bound}
  &\leq
    K \lambda_1^{1/2} \norm{b_f - b_g}_2
  \,,
\end{align}
where $b_f,b_g \in B$.
Thus, by \cite[Theorem 1.3.6]{Talagrand},
\begin{align*}
    \gamma_2(\mathcal{F}, \psi_2)
  &\leq
    c K \lambda_1^{1/2} \gamma_2 ( B, \norm{\,\cdot\,}_2 )
  \,.
\end{align*}
Then applying Talagrand's Majorizing Measure Theorem \cite[Theorem 2.1.1]{Talagrand}
yields
\begin{equation}
  \label{eq:gamma2-complexity}
    \gamma_2(\mathcal{F}, \psi_2)
  \leq
    c K \lambda_1^{1/2} \E
    \sup_{b \in B}
    \langle Y, b \rangle
  \,,
\end{equation}
where $Y$ is a $p$-dimensional standard Gaussian random vector.
Recall that $B = \Ball_1^p(R_1) \cap \Sphere_2^{p-1}$. So 
\begin{align*}
    \E \sup_{b \in B} \langle Y, b \rangle
  &\leq
    \E \sup_{b \in \Ball_1^p(R_1) \cap \Ball_2^p(1)} 
    \langle Y, b \rangle
  \,.
\end{align*}
Here, we could easily upper bound the above quantity by the supremum over 
$\Ball_1^p(R_q)$ alone. Instead, we use a sharper upper bound due to 
\textcite[Theorem 5.1]{Gordon:2007}:
\begin{align*}
    \E \sup_{b \in \Ball_1^p(R_1) \cap \Ball_2^p(1)} 
    \langle Y, b \rangle  
  &\leq
    R_1 \sqrt{2 + \log(2 p / R_1^2)}
  \\
  &\leq
    2 R_1 \sqrt{\log(p/R_1^2)}
  \,,
\end{align*}
where we used the assumption that $R_1^2 \leq p/e$ in the last inequality.
Now we apply \Cref{lem:mendelson} to get
\begin{align*}
  &
    \E \sup_{b \in B} D_1(B)
  \\
  &\leq
    c K^2 \lambda_1
    \max \left\{
      R_1 \sqrt{ \frac{\log(p/R_1^2)}{n} }
    ,
      R_1^2 \frac{\log(p/R_1^2)}{n}
    \right\}
  \,.
\end{align*}
Turning to $D_2(b)$, we can take $n = 1$ in \Cref{lem:mendelson} and
use a similar argument as above, because
\begin{equation*}
  D_2(b)
  \leq |\langle \bar{Z}, \Sigma^{1/2} b \rangle^2 - \E \langle \bar{Z}, \Sigma^{1/2} b \rangle^2|
  + \E \langle \bar{Z}, \Sigma^{1/2} b \rangle^2
  \,.
\end{equation*}
We just need to bound the $\psi_2$-norms of
$f(\bar{Z})$ and $(f-g)(\bar{Z})$ to get bounds that are analogous to
\cref{eq:psi1-diameter,eq:psi2-bound}.
Since $\bar{Z}$ is the sum of the independent random variables
$Z_i/n$,
\begin{align*}
    \sup_{b \in B}
    \norm{f(\bar{Z})}_{\psi_2}^2
  &=
    \sup_{b \in B}
    \norm{\langle \bar{Z}, \Sigma^{1/2} b_f \rangle}_{\psi_2}^2
  \\
  &\leq
    \sup_{b \in B}
    c \sum_{i=1}^n \norm{\langle Z_i, \Sigma^{1/2} b_f \rangle}_{\psi_2}^2 / n^2
  \\
  &\leq
    \sup_{b \in B}
    c K^2 \lambda_1 \norm{b_f}_2^2 / n
  \\
  &\leq
    c K^2 \lambda_1 / n
  \,,
\end{align*}
and similarly,
\begin{equation*}
  \norm{(f-g)(\bar{Z})}_{\psi_2}
  \leq c K \lambda_1 \norm{b_f - b_g}_2^2 / n
  \,.
\end{equation*}
So repeating the same arguments as for $D_1$, we get a similar bound for $D_2$.
Finally, we bound $\E D_2(b)$ by
\begin{align*}
  \E \langle \bar{X}, b \rangle^2
  &= b^T \Big( \sum_{i=1}^n \sum_{j=1}^n \E X_i X_j^T / n^2 \Big) b
  \\
  &= b^T \Big( \sum_{i=1}^n \E X_i X_i^T / n^2 \Big) b
  \\
  &= \norm{\Sigma^{1/2} b}_2^2 / n \\
  &\leq \lambda_1 / n
  \,.
\end{align*}
Putting together the bounds for $D_1$ and $D_2$ and then adjusting constants
completes the proof.
\qed

\paragraph{Proof of \Cref{lem:L0-Gaussian}}
Using a similar argument as in the proof of \Cref{lem:L1-deviation} we can show that
\begin{align*}
  \E \sup_{b \in \Sphere_2^{p-1} \cap \Ball_0^p(d)} 
  |b^T (S - \Sigma) b|
  \leq c K^2 \lambda_1
  \max \left\{ \frac{A}{\sqrt{n}}, \frac{A^2}{n} \right\}
  \,,
\end{align*}
where 
\begin{equation*}
  A =
  \E \sup_{\Sphere_2^{p-1} \cap \Ball_0^p(d)} \langle Y, b \rangle
\end{equation*}
and $Y$ is a $p$-dimensional standard Gaussian $Y$.
Thus we can reduce the problem to bounding the supremum of a Gaussian process.

Let $\mathcal{N} \subset \Sphere_2^{p-1} \cap \Ball_0^p(d)$ be a minimal
$\delta$-covering of $\Sphere_2^{p-1} \cap \Ball_0^p(d)$ in the Euclidean
metric with the property that for each $x\in\Sphere_2^{p-1} \cap \Ball_0^p(d)$
there exists $y \in \mathcal{N}$ satisfying $\norm{x - y}_2 \leq \delta$ and
$x - y \in \Ball_0^p(d)$. (We will show later that such a covering exists.)

Let $b^* \in \Sphere_2^{p-1} \cap \Ball_0^p(d)$ satisfy
\begin{equation*}
  \sup_{\Sphere_2^{p-1} \cap \Ball_0^p(d)} \langle Y, b \rangle
  = \langle Y, b^* \rangle
  \,.
\end{equation*}
Then there is $\tilde{b} \in \mathcal{N}$ such that $\norm{b^* - \tilde{b}}_2
\leq \delta$ and $b^* - \tilde{b} \in \Ball_0^p(d)$.  
Since $(b^* - \tilde{b}) / \norm{b^* - \tilde{b}}_2 \in \Sphere_2^{p-1} \cap
\Ball_0^p(d)$,
\begin{align*}
    \langle Y, b^* \rangle
  &=
    \langle Y, b^* - \tilde{b} \rangle
  +
    \langle Y, \tilde{b} \rangle
  \\
  &\leq
    \delta \sup_{u \in \Sphere_2^{p-1} \cap \Ball_0^p(d)} \langle Y, u \rangle
    + \langle Y, \tilde{b} \rangle
    \\
    &\leq
      \delta \langle Y, b^* \rangle
      + \max_{b \in \mathcal{N}} \langle Y, b \rangle
  \,.
\end{align*}
Thus,
\begin{equation*}
    \sup_{b \in \Sphere_2^{p-1} \cap \Ball_0^p(d)} \langle Y, b \rangle
  \leq
    (1-\delta)^{-1} \max_{b \in \mathcal{N}} \langle Y, b \rangle
  \,.
\end{equation*}
Since $\langle Y, b \rangle$ is a standard Gaussian for every $b \in \mathcal{N}$, 
a union bound \cite[Lemma 2.2.2]{vanderVaartAndWellner} implies
\begin{equation*}
    \E \max_{b \in \mathcal{N}} \langle Y, b \rangle
  \leq
    c \sqrt{\log \card{\mathcal{N}}}
\end{equation*}
for an absolute constant $c > 0$.  Thus,
\begin{align*}
    \E \sup_{b \in \Sphere_2^{p-1} \cap \Ball_0^p(d)} \langle Y, b \rangle
  &\leq
    c (1-\delta)^{-1} \sqrt{\log \card{\mathcal{N}}}
\end{align*}

Finally, we will bound $\log\card{\mathcal{N}}$ by constructing a
$\delta$-covering set and then choosing $\delta$. It is well known
that the minimal $\delta$-covering of $\Sphere_2^{d-1}$ in the Euclidean
metric has cardinality at most $(1 + 2/\delta)^d$. Associate with each
subset $I \subseteq \{1,\ldots, p\}$ of size $d$, a minimal
$\delta$-covering of the corresponding isometric copy of $\Sphere_2^{d-1}$.
This set covers every possible subset of size $d$, so for each $x \in
\Sphere_2^{p-1}\cap\Ball_0(d)$ there is $y \in \mathcal{N}$ satisfying 
$\norm{x - y}_2 \leq \delta$ and $x - y \in \Ball_0(d)$.
Since there are ($p$ choose $d$) possible subsets,
\begin{align*}
    \log \card{\mathcal{N}}
  &\leq
    \log \binom{p}{d} + d \log(1+2/\delta)
  \\
  &\leq
    \log \left(\frac{p e}{d}\right)^d + d \log(1+2/\delta)
  \\
  &=
    d + d \log(p/d) + d \log(1+2/\delta)
  \,.
\end{align*}
In the second line, we used the binomial coefficient bound $\binom{p}{d} \leq (e p / d)^d$. 
If we take $\delta = 1/4$, then
\begin{align*}
    \log \card{\mathcal{N}}
  &\leq
    d + d \log(p/d)  + d \log 9
  \\
  &\leq
    c d \log(p / d)
  \,,
\end{align*}
where we used the assumption that $d < p / 2$.  Thus,
\begin{equation*}
  A 
  = \E \sup_{\Sphere_2^{p-1} \cap \Ball_0^p(d)} \langle Y, b \rangle
  \leq c d \log(p / d)
\end{equation*}
for all $d \in [1, p/2)$.
\qed

\end{document}